\definecolor{mydarkblue}{rgb}{0,0.08,0.45}
\pgfplotsset{compat=1.18}
\tikzset{auto, >=stealth}
\tikzset{every edge/.append style={shorten >= 1pt}}
\tikzset{
    main node/.style={circle,draw,minimum size=1cm,inner sep=0pt},
}
\theoremstyle{definition}
\newtheorem{definition}{Definition}
\newtheorem{theorem}{Theorem}
\begin{document}

\twocolumn[
\mlsystitle{Fine-Tuning Language Models Using Formal Methods Feedback
}



\mlsyssetsymbol{equal}{*}

\begin{mlsysauthorlist}
\mlsysauthor{Yunhao Yang}{equal,ut}
\mlsysauthor{Neel P Bhatt}{equal,ut}
\mlsysauthor{Tyler Ingebrand}{equal,ut}
\mlsysauthor{William Ward}{ut}
\mlsysauthor{Steven Carr}{ut}
\mlsysauthor{Zhangyang Wang}{ut}
\mlsysauthor{Ufuk Topcu}{ut}
\end{mlsysauthorlist}

\mlsysaffiliation{ut}{University of Texas at Austin, Austin, Texas, USA}

\mlsyscorrespondingauthor{Yunhao Yang}{yunhaoyang234@utexas.edu}
\mlsyscorrespondingauthor{Ufuk Topcu}{utopcu@utexas.edu}

\mlsyskeywords{Pre-trained Language Model, Autonomous Systems, Formal Methods, Verification, Fine-Tuning}

\vskip 0.3in

\begin{abstract}

Although pre-trained language models encode generic knowledge beneficial for planning and control, they may fail to generate appropriate control policies for domain-specific tasks. Existing fine-tuning methods use human feedback to address this limitation, however, sourcing human feedback is labor intensive and costly. We present a fully automated approach to fine-tune pre-trained language models for applications in autonomous systems, bridging the gap between generic knowledge and domain-specific requirements while reducing cost. The method synthesizes automaton-based controllers from pre-trained models guided by natural language task descriptions. These controllers are verifiable against independently provided specifications within a world model, which can be abstract or obtained from a high-fidelity simulator. Controllers with high compliance with the desired specifications receive higher ranks, guiding the iterative fine-tuning process. We provide quantitative evidences, primarily in autonomous driving, to demonstrate the method's effectiveness across multiple tasks. The results indicate an improvement in percentage of specifications satisfied by the controller from 60\% to 90\%.

\end{abstract}
]



\printAffiliationsAndNotice{\mlsysEqualContribution} 

\section{Introduction}

Pre-trained language models encode rich world knowledge that is useful for planning and control.
Recent works use pre-trained models to synthesize control policies for autonomous systems such as in autonomous driving \cite{seff2023motionlm}, surgical robotics \cite{janssen2023use}, and aircraft operation \cite{aerospace10090770}. The control policies yield high-level actions that an agent should take in order to satisfy objectives specified via natural language prompts.

However, in specific domains, pre-trained models may fail to generate appropriate control policies. For instance, an autonomous driving system may require knowledge about traffic rules and conventions specific to a given country. Such specific rules and conventions may be beyond the knowledge encoded in the pre-trained model. 

To address this shortcoming, several works use human feedback for fine-tuning pre-trained models and to incorporate required domain knowledge \cite{RLHF, Christiano2017RLHF, Rafailov2023DPO}. Human feedback evaluates the extent to which the output of a pre-trained model aligns with the desired objectives. For example, the provision of a binary ranking of like or dislike for each model output can act as a feedback source. This feedback from human expertise enables fine-tuning of the pre-trained model and allows implicit incorporation of domain-specific knowledge. However, obtaining feedback from humans is labor-intensive and costly.

We investigate how similar feedback can be automatically obtained using artifacts from formal methods. Suppose we have a world model, which is either abstract or obtained from a high-fidelity simulator, and a set of specifications. 
We can verify, either formally or empirically, if a controller generated by the language model meets the specifications \cite{Yang2022AutomatonBasedRO}. The measure of compliance can act as a source of feedback for fine-tuning, similar to human feedback. Since this procedure is automated, such feedback is less labor-intensive and cheaper.

We develop a method to fine-tune pre-trained models based on automated feedback using artifacts from formal methods. The proposed method synthesizes an automaton-based controller from the pre-trained model given a natural language task description \cite{Yang2022AutomatonBasedRO}. Such an automaton-based controller is formally verifiable against independently provided specifications (e.g., a driving rule book \cite{censi2019liability}) when implemented in a specific world model. We can obtain the number of specifications satisfied by each controller and use it for ranking. We then iteratively fine-tune the pre-trained model using this ranking as a feedback source.

If the world model is obtained from a high-fidelity simulator rather than an abstract model, we collect trajectories from the simulator. The trajectories are sequences of state-action pairs which can be checked against the provided specifications. A controller satisfying a larger number of specifications when executed in the simulator is assigned a higher rank. We use the obtained ranks for fine-tuning.

To demonstrate the performance of the proposed method, we provide experimental results covering multiple tasks in an autonomous driving system, although applicability is not limited to this domain. 
The quantitative results indicate a significant improvement in the percentage of specifications satisfied, from 60\% to above 90\%, confirming that the proposed method can effectively fine-tune the pre-trained model. Furthermore, we show the real-world applicability of the controller synthesized from the fine-tuned model, providing evidence for the necessity of the proposed method.
\section{Related Work}

\paragraph{Fine-tuning from Human Feedback.}

Reinforcement learning from human feedback (RLHF) is a preference alignment strategy that learns a reward model from human preferences and then fine-tunes the pre-trained language model using reinforcement learning \cite{RLHF}. In some works, before fine-tuning begins, humans compare the accuracy of multiple responses to a single input and indicate which is preferred, generating a data set of preferences that is used to train a reward function \cite{RLHF, Ouyang2022FollowInstructions}. Other methods optimize the reward function and fine-tune the language model simultaneously. As the model generates outputs, a human indicates which output is preferred, sending new feedback for the reward function to learn, thus impacting the model's accuracy \cite{Christiano2017RLHF}.

Direct preference optimization (DPO) is a preference alignment strategy that implicitly optimizes the same objective as RLHF without explicitly learning a reward model or using reinforcement learning. DPO optimizes model outputs directly from human feedback data using a modified maximum likelihood objective, reducing the number of training stages and improving stability \cite{Rafailov2023DPO}.

However, all of the above works rely on humans to provide feedback on which outputs are preferred. Obtaining an excessive amount of human feedback is labor intensive. In contrast, the method we propose automatically ranks the outputs from language models. Hence we can obtain an unlimited number of data points to fine-tune the language model.



\paragraph{Fine-tuning from Generated Outputs}
Some methods fine-tune a language model using the outputs of another model. For example, a language model can learn how to generate common sense phrases \cite{Zhou2023commonsense} or output chain-of-thought reasoning \cite{Li2023symbolic} using responses from a model that already exhibits the desired behavior.  Other methods train a language model using the model's own outputs by identifying high-quality statements and feeding them back into the model as examples of correct responses \cite{Bhagavatula2023I2D2, Jung2023Impossible}. One approach combines both methods, first fine-tuning using the outputs of a separate pre-trained model, and then fine-tuning again using the model's own filtered outputs \cite{Jung2023Impossible}. Another strategy is to modify the backpropagation process so that only certain parameters are updated \cite{Chen2020FtDPLM}. 

These methods are not capable of fine-tuning domain-specific language models since all the generated outputs from itself or other models lack domain-specific knowledge as well. In contrast, the method we proposed can fine-tune the language model to satisfy domain-specific requirements.

\paragraph{Formal Methods and Verification on Language Models.}
Existing works convert natural language to formal language, which can be used for verification \cite{Baral2011UsingIL, Sadoun2013FromNL, Ghosh2014ARSENALAR}. Recent works show that language models can be trained to convert natural language to formal language, with applications in representing mathematics, generating proofs, and creating assurance cases \cite{Hahn2022Formal2NatLang, Wu2022Autoformalization, First2023Baldur, Chen2023Trusta}.
One method is to design input prompts that include task-specific information (e.g., definitions, response templates, and detailed examples) that enable a language model to divide a goal into individual steps and develop formal constraints on the system \cite{Chen2023Trusta}. 
Other methods iteratively refine the input prompt to the language model based on counter-examples until the outputs pass formal verification \cite{Jha2023LLMFM, Yang2022AutomatonBasedRO}. Although these works utilize formal methods, there are still humans in the loop, while our proposed method aims to be fully automated without any human intervention.

\section{Preliminaries}
\begin{figure}
    \centering
    \begin{tikzpicture}[
    scale=.6,
    node distance=2.2cm,
    thick,
    every node/.append style={transform shape},
]

\node[state] (p0)
    at (4, 3)
    {\Large $p_{0}$};
\node[state] (p1)
    at (8, 3)
    {\Large $p_{1}$};

\node[state,initial] (q0)
    at (3, 0)
    {\Large $\Autstate_{0}$};
\node[state] (q1)
    at (6, 0)
    {\Large $\Autstate_{1}$};
\node[state] (q2)
    at (9, 0)
    {\Large $\Autstate_{2}$};

\path[->,sloped]

(p0) 
edge[bend left] node[]
    {label of $p_1: \Autprop[2]\in \AutProps$}
    (p1)

(p1) 
edge[bend left] node[below]
    {label of $p_0: \Autprop[1]\in \AutProps$}
    (p0)

(q0) 
edge[] node[]
    {$(\ltrue,\Autsymbout_1)$}
    (q1)

(q1) 
edge[bend left] node[]
    {$( \text{input symbol } \Autprop[1] \in \AutProps, \text{output symbol } \Autsymbout_2 \in P_A)$}
    (q2)
edge[bend right] node[swap]
    {$(\Autprop[1] \lor \Autprop[2],\noop)$}
    (q2)

;

\end{tikzpicture}
    \caption{Examples of an automaton-based model (top) and a controller (bottom).}
    \label{fig: automata}
\end{figure}
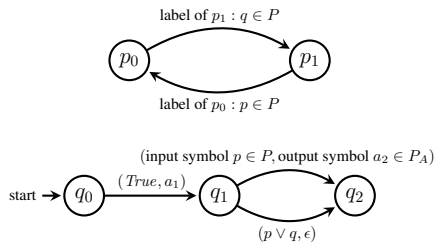

\paragraph{Automaton-Based Model for System or Environment.}
A model is an abstract representation that encodes the static and dynamic information of a system or an environment. We use a transition system to build the model. 

A transition system $\Aut[model] \coloneqq
\langle
    \AutSymbsOut[model],
    \AutStates[model],
    \AutTransFunc[model],
    \AutLabelFunc[model]
\rangle$
consists of a set of output symbols $\AutSymbsOut[model]$, a set of states $\AutStates[model]$, a non-deterministic transition function $\AutStates[model] \times \AutStates[model] \to \{0,1\}$, and a labeling function $\AutLabelFunc[model]: \AutStates[model] \to \AutSymbsOut[model]$. 

We introduce a set of atomic proposition $\AutProps$ such that $\AutSymbsOut[model] \coloneqq 2^{\AutProps}$, i.e., a symbol $\Autsymbin \in \AutSymbsOut[model]$ is the set of atomic propositions in $\AutProps$ that evaluate to \textit{True}. Each symbol $\Autsymbin$ captures the system or environment behavior. We present an example in Figure \ref{fig: automata}. We will leverage the fact that automaton-based structures are formally verifiable in the proposed method.

\paragraph{Automaton-Based Controller.}
A controller is a system component responsible for making decisions and taking actions based on the system's state. A controller can be mathematically represented as a mapping from the system's current state to an action, which is executable in the task environment. We use a \glsreset{aut}\gls{aut} to build a controller for a sequential decision-making task.

A \gls{aut} is a tuple
$\Aut =
\langle
    \AutSymbsIn,
    \AutSymbsOut,
    \AutStates,
    \Autstate[init],
    \AutTransFunc
\rangle$
where
$\AutSymbsIn$ and $\AutSymbsOut$ are the sets of input and output symbols,
$\Autstate[init] \in \AutStates$ is the initial state,
and $\AutTransFunc: \AutStates \times \AutSymbsIn \times \AutSymbsOut \times \AutStates \to \{0,1\}$ is a non-deterministic transition function. The transition function is a membership function---a transition exists when it evaluates to $1$.

Each input symbol $\Autsymbin \in \AutSymbsIn$ is composed of the atomic propositions from $\AutProps$, which is the set of atomic propositions we introduced for the model.
We introduce another set of atomic propositions $P_A$ for the output alphabets $\AutSymbsOut \coloneqq 2^{P_A}$.
We also allow for a ``no operation/empty'' symbol $\noop \in \AutSymbsOut$.
Note that the input symbols comprise all possible dynamics of the environment or system in which the controller operates, and the output symbols comprise all the actions allowed by the controller. 
See Figure \ref{fig: automata} for an example. 


\section{Fine-Tuning Pre-trained Language Model for Autonomous System}
We develop a method for fine-tuning pre-trained language models for specific control tasks, named \textit{direct preference optimization via automated feedback} (\textbf{\method{}}). The method first obtains human-provided information regarding the autonomous system. It then constructs a model that encodes the information about the system. Next, we query the pre-trained language model on a particular system control task and get multiple responses from the language model via sampling. We construct automata from the responses and apply verification methods to check how many user-provided specifications each automaton satisfies. We rank the responses by the number of satisfied specifications. Last, we send the prompt and ranked responses to the DPO algorithm to fine-tune the language model.

\method{} does not require repeated feedback from humans. Therefore, we can obtain an unlimited number of prompt-response pairs until the language model converges.

\begin{figure}[t]
    \centering
    \includegraphics[width=\linewidth]{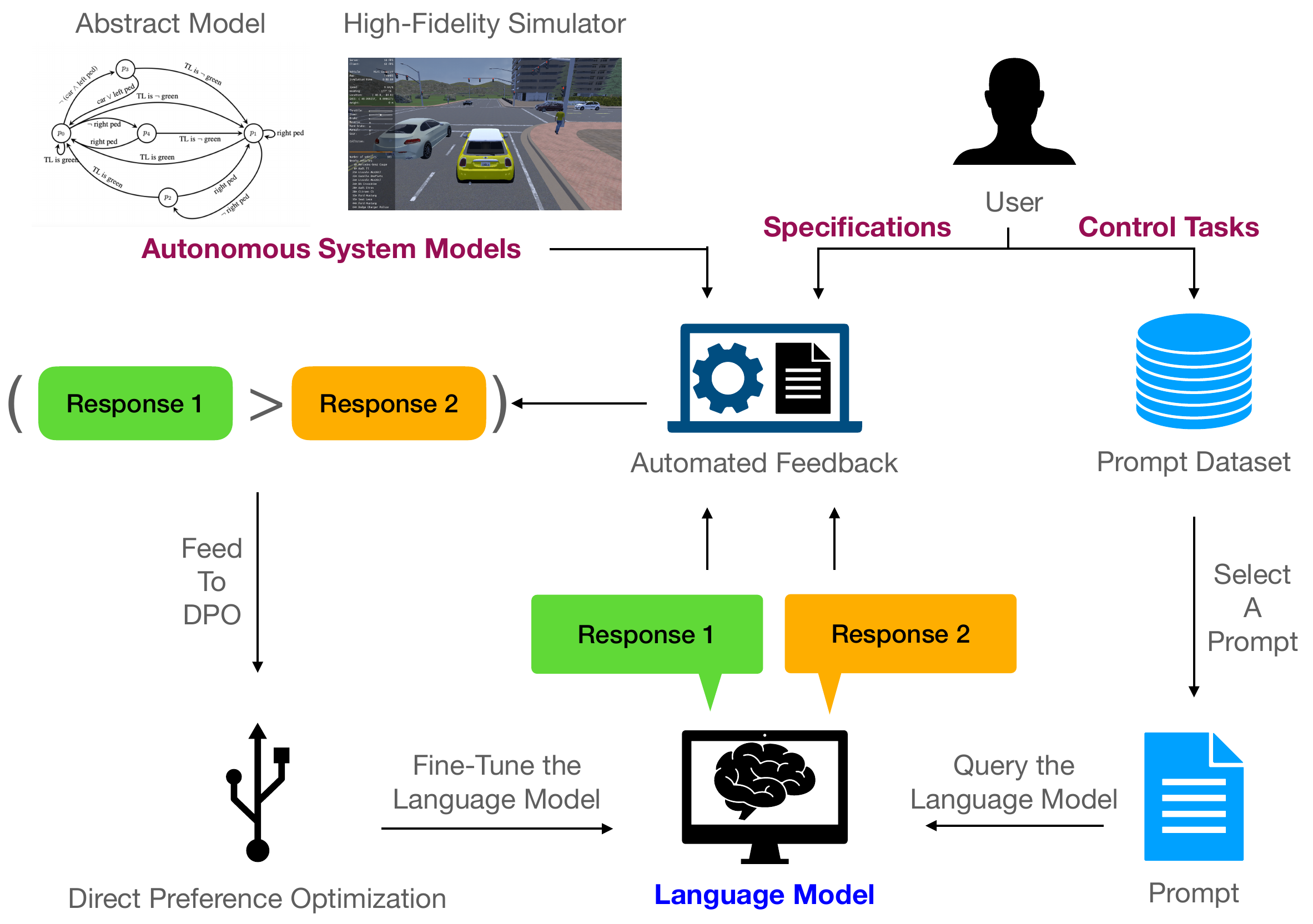}
    \caption{The overall pipeline of fine-tuning a language model for autonomous systems via automated feedback. We mark the inputs to the pipeline in {\color{purple} purple} and the output in {\color{blue} blue}.}
    \label{fig: pipe}
\end{figure}

\subsection{Automaton-Based Representation for Natural Language and Autonomous System}
\begin{figure}[t]
    \centering
    \includegraphics[width=\linewidth]{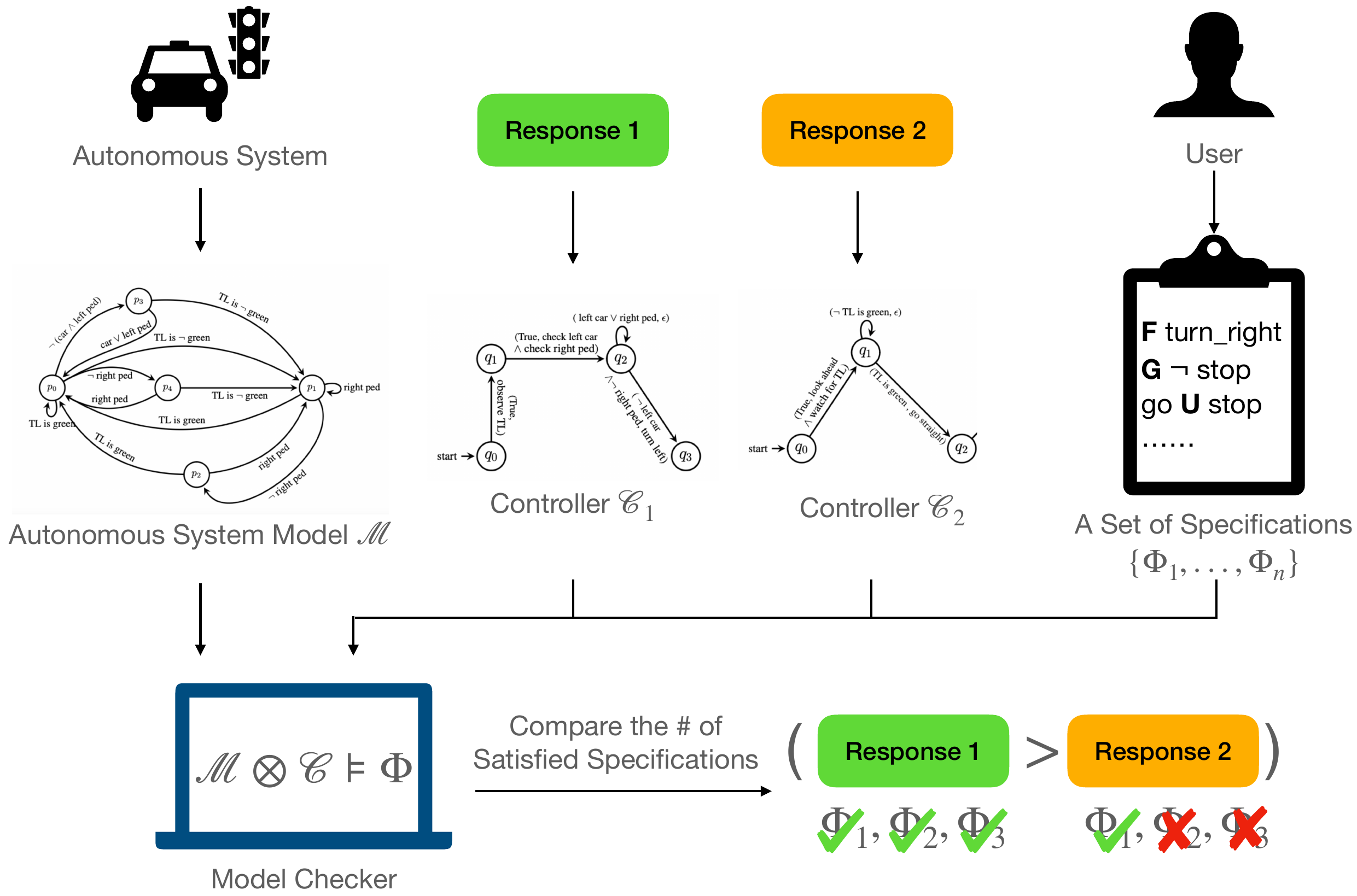}
    \caption{This diagram depicts the method of ranking responses by formal verification of the induced automata. We present the sample automata in Figures \ref{fig: regular_traffic_light} and \ref{fig: right}.}
    \label{fig: verification}
\end{figure}

\paragraph{Modeling the Autonomous System.}
\method{} starts by constructing an automaton-based model encoding the information about the autonomous system. Such information is obtained from external sources such as human experts or system operation manuals. The information includes but is not limited to a set of propositions that describe the system's behaviors and a set of control signals (actions) that can affect the system's states. We encode the set of behaviors in an atomic proposition set $P$ and the set of actions in an atomic proposition set $P_A$.

Recall that a model consists of a set of states, a set of symbols, a transition function, and a label function. As we defined $P$ and $P_A$, we build $2^{|P|}$ states whose label is $\sigma \in 2^P$ respectively. $|P|$ is the number of propositions in $P$. Next, for every two states $p_i$ and $p_j$, we check whether the system supports the transition between the label of $p_i$ and the label of $p_j$. If the system supports such a transition, we add it into the transition function.

Finally, we remove the states with no incoming and outgoing transitions. However, from a conservative perspective, we can build transitions for every pair of states and not remove any states. The conservative approach can avoid potential missing transitions but will significantly increase the computation cost for formal verification.

To illustrate the procedure, suppose there is a traffic light system operating in the order of red-green-yellow-red. We have the proposition set $P = \{green, yellow, red\}$ and transitions (green to red), (red to yellow), and (yellow to green). Hence, we only keep three states with labels $green, yellow, red$ respectively and remove all the states with other labels (e.g., $green \land yellow$).

\begin{algorithm}
    \caption{An algorithm for system modeling}\label{alg: model}
    \begin{algorithmic}
        \STATE\algorithmicinput{ Atomic Proposition Set $P$, System $\mathcal{S}$} 
        \STATE\algorithmicoutput{ $\AutSymbsOut[model], \AutStates[model], \AutTransFunc[model], \AutLabelFunc[model]$}
        \STATE $\AutSymbsOut[model] = 2^P$
        \STATE\algorithmicfor{ $\sigma$ in $\AutSymbsOut[model]$ }
            \STATE $\quad \AutStates[model] = \AutStates[model] + p_{new}$
            \STATE $\quad \AutLabelFunc[model](p_{new}) = \sigma$
        \STATE\algorithmicendfor

        \STATE\algorithmicfor{ $(p_i, p_j)$ in $\AutStates[model]$ }
            \STATE $\quad$ \algorithmicif{ $p_i \rightarrow p_j$ is allowed by $\mathcal{S}$ }
                \STATE $\quad\quad \AutLabelFunc[model](p_i, p_j) = 1$
            \STATE $\quad$ \algorithmicendif
        \STATE\algorithmicendfor
        \STATE $\AutStates[model] = \AutStates[model] \setminus \{p_i | \forall_j \AutLabelFunc[model](p_i, p_j) = \AutLabelFunc[model](p_j, p_i) = 0\}$
    \end{algorithmic}
\end{algorithm}

\paragraph{Task Prompt Engineering.}
Prior to fine-tuning the language model, we collect a prompt dataset. The prompt dataset consists of the queries on the control tasks that operate in the autonomous system.

Then, we define a prompt engineering procedure to extract relative task knowledge from the language model.
For each prompt in the prompt dataset, we first use the following format to obtain the responses from the language model:
\\~\\

\begin{lstlisting}[language=completion]
    <prompt>Define the steps for <emph>task description</emph> 
    1.</prompt><completion> <emph>step one description</emph>
    2. <emph>step two description</emph>
    ...</completion>
\end{lstlisting}
Blue texts and red texts indicate the input prompt and the language model's responses, respectively. This format forces the outputs to be a list of step descriptions for the task described in the input prompt.

Once we get the responses, we query the language model again to map the step descriptions into the set of defined atomic propositions $P$:
\begin{lstlisting}[language=completion]
    <prompt>Align the following steps to align the set of Boolean propositions <emph>{prop 1,..., prop n}</emph> and actions <emph>{act 1, ..., act m}</emph>:
    1. <emph>step one description</emph>
    2. <emph>step two description</emph>
    ...
    </prompt><completion>
    1. <emph>aligned step one description</emph>
    ...</completion>
\end{lstlisting}
We rephrase the step description so that the propositions and actions are consistent with the model. Therefore, we avoid failing the verification process due to language ambiguity, i.e., different phrases with the same meaning.

Note that \method{} also aims to fine-tune the language model to output steps that can be easily aligned to the propositions and actions. Therefore, the expected responses from the fine-tuned language model should have the following properties: 1. The language model can easily and correctly align the textual step descriptions to the given propositions and actions. 2. The aligned step descriptions satisfy the user-provided specifications.

To check the second property, we need to construct an automaton-based controller from the textual step descriptions. Then, we implement the controller in the model and verify it against the specifications.

\paragraph{Controller Construction.}
We follow the method GLM2FSA \cite{Yang2022AutomatonBasedRO} to construct an \gls{aut}-based controller to encode the textual step descriptions.
Specifically, we start from the aligned textual step descriptions and apply semantic parsing to break the steps into a list of verb phrases. 
Recall that a controller consists of a set of states $\AutStates$, an initial state $q_0$, input symbols $\AutSymbsIn$, output symbols $\AutSymbsOut$, and a transition function $\AutTransFunc$.
We use the verb phrases to define the input and output symbols according to the grammar from GLM2FSA. Then, we build one state corresponding to each step, with the state corresponding to the first step as the initial state. Last, we follow the GLM2FSA algorithm to build the transition rules. We present a step-by-step illustrative example in Section \ref{sec: empirical}.

\subsection{Automated Feedback}
\label{sec: verification}
Given a set of specifications, we provide two ways of checking whether the controllers constructed from the language model's outputs satisfy each specification. For each output, the method generates feedback consisting of the number or percentage of specifications being satisfied.

\paragraph{Formal Verification.}
Formal verification requires an automaton-based model, an automaton-based controller, and a set of logical specifications. So far, we have constructed the model and the controller. The specifications include the expectation of task achievement or safety requirements, represented in temporal logic (e.g., linear temporal logic \cite{Pnueli77LTL}). The temporal logic specifications are logic formulas over propositions $P \cup P_A$. We describe it in detail in the Appendix.
These specifications are either provided by the task designer or extracted from existing rule books.

In the verification procedure, we first implement the controller in the model. Mathematically, we define a product automaton $\Aut[product] = \Aut[model] \otimes \Aut[controller]$ describing the interactions of the controller $\Aut[controller]$ with the model $\Aut[model]$, i.e., how the controller's actions change the model's states and how the model's states affect the controller's decision-making on its next action. Note that the verification procedure implicitly assumes that all the actions can be successfully operated and hence lead to the corresponding states of the controller and the model.

We run a model checker (e.g., NuSMV~\citep{Cimatti2002NuSMV}) to verify if the product automaton satisfies each specification,
\begin{equation}
    \label{eq: model-checking}
    \Aut[model] \otimes \Aut[controller] \models \Phi.
\end{equation}
We verify the product automaton against each specification for all the possible initial states. If the verification fails, the model checker returns a counter-example.
The counter-example is a trace---a sequence of states---violates the specifications. The NuSMV model checker returns the sequence of states from the product automaton along with the output symbols. Mathematically, the traces are in a format of $(p_1, q_1, c_2 \cup a_1), (p_2, q_2, c_2 \cup a_2),...$ where $p_i \in \AutStates[model], q_i \in \AutStates, c_i = \AutLabelFunc[model](p_i), a_i \in A \text{ such that } \AutTransFunc(q_i, \AutLabelFunc[model](p_i), a_i, q_{i+1}) = 1$.

We present the definitions of temporal logic and product automaton in the Appendix.

\paragraph{Empirical Evaluation.}
In some scenarios, obtaining models for autonomous systems may be hard. We propose using empirical evaluation to account for the scenarios where models are not present. Empirical evaluation requires an autonomous system $\mathcal{S}$, a constructed controller $\mathcal{C}$, an atomic proposition set $P$, a set of actions $P_A$, and a grounding method $\mathbf{G}$. Specifically, $\mathbf{G}: \mathcal{C} \times \mathcal{S} \rightarrow (2^P \times 2^{P_A})^N$ operates the controller directly in the system and returns a sequence of propositions and actions describing the operation. $N$ is the max length of the sequence. The sequence is evaluated as follows:
\begin{equation}
    \mathbf{G}(\mathcal{C}, \mathcal{S}) = (2^P \times 2^{P_A})^N \models \Phi.
\end{equation}
After evaluating every sequence against the specifications, we get the percentage of sequences, $\mathbb{P}_{\Phi}$, which satisfy each specification:
\begin{center}
    $\mathbb{P}_{\Phi} = \frac{\text{number of sequences satisfying } \Phi }{\text{total number of sequences}}$.
\end{center} 

\subsection{Fine-Tuning via Verification Feedback}
\paragraph{Collection of the Language Model's Outputs.}
Once we select the autonomous system and obtain the model for the system, we can query the model for instructions on tasks that are operable in the system, following the format described in the previous section. Different responses for the same input task can be sampled from the language model. 
Then, we can rank these responses and fine-tune the language model to output the best response according to the system model.

\paragraph{Ranking the Outputs and Fine-Tuning the Language Model.}
We apply the verification feedback method for every two responses from the language model associated with the same task prompt to rank the preferences of the two responses. As a result, we obtain a data point $(x, y_w, y_l)$, where $x$ is the input prompt, $y_w$ is the preferred response and $y_l$ is the unpreferred response. For a given set of specifications, we construct a controller from each response and verify it against each of the specifications. The response satisfying more specifications is preferred. If we have collected $N$ tasks and $m$ responses per task. Then, we will have a maximum number of $N \times C_2(m)$ data points, where $C_i(j)$ means $j$ choose $i$.

Then, we feed the pairs of responses, along with their prompt, to the DPO algorithm. The DPO algorithm fine-tunes the parameters of the language model accordingly. During fine-tuning, we use low-rank approximation to reduce computational complexity \cite{lora}.

\section{Experimental Results}
To validate the proposed method, we apply \method{} on Llama2-7B for controlling an autonomous driving system. We first provide a demonstration of how we obtain the verification feedback. Then, we present quantitative results to show the effectiveness of \method{} at the mathematical level. Next, we use an autonomous driving simulator, Carla \cite{carla}, to show Llama's performance enhancement at the operation level.
Lastly, we provide evidence that the generated controller can be transferred to real-life, indicating the applicability  of our approach. 

\subsection{Example Demonstration}
\label{sec: empirical}
\paragraph{Examples of System Modeling.}
To obtain formal verification feedback for the language model's outputs, we first construct automaton-based models that encode the information of the autonomous driving system. Such information includes the objects from the environment and potential environment dynamics that can be perceived by the autonomous vehicle. Note that the models are externally provided, either from human expertise or system manuals.

\begin{figure}[t]
\centering
\includegraphics[width=0.45\columnwidth]{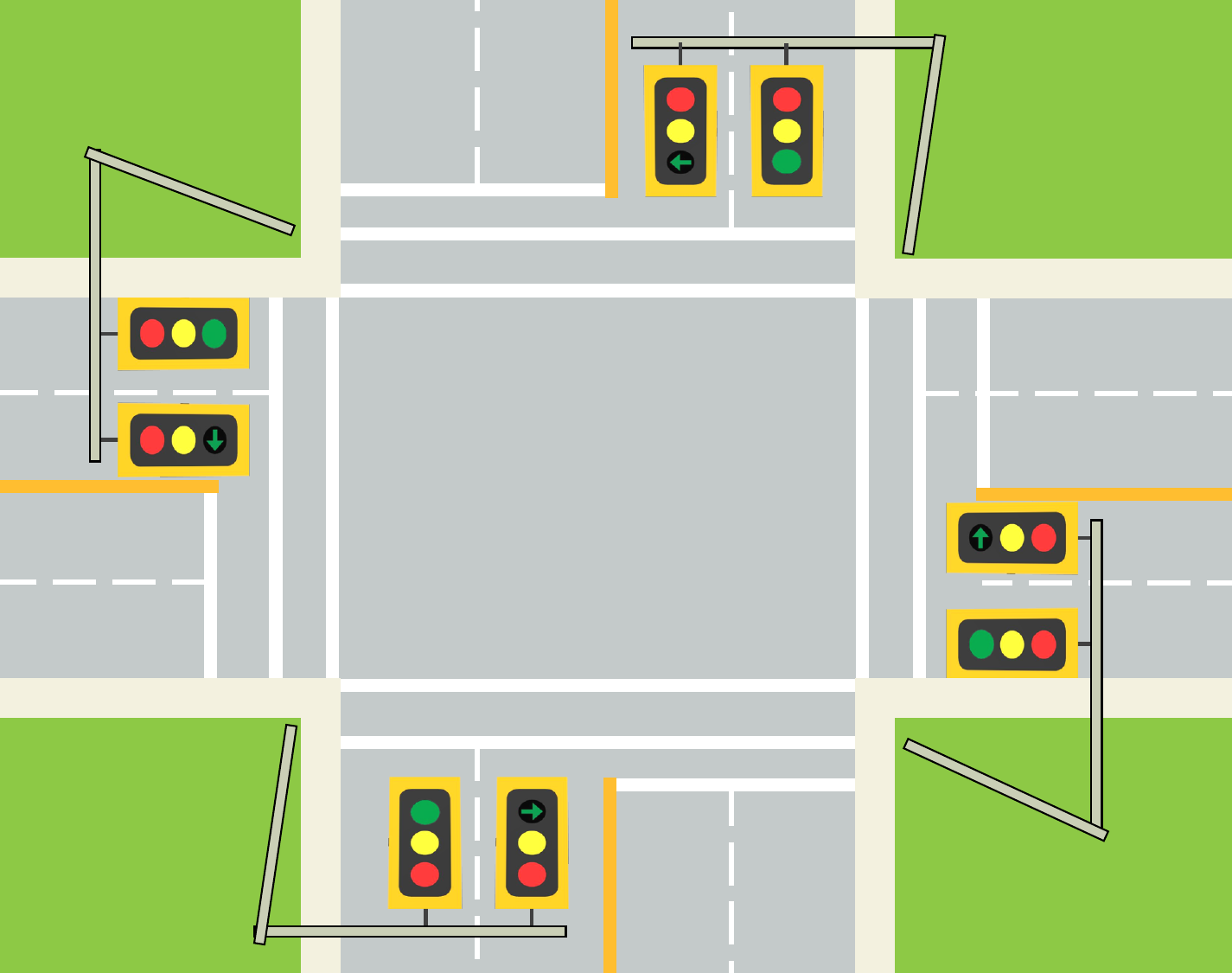}%
\hspace{0.5mm}%
\includegraphics[width=0.45\columnwidth]{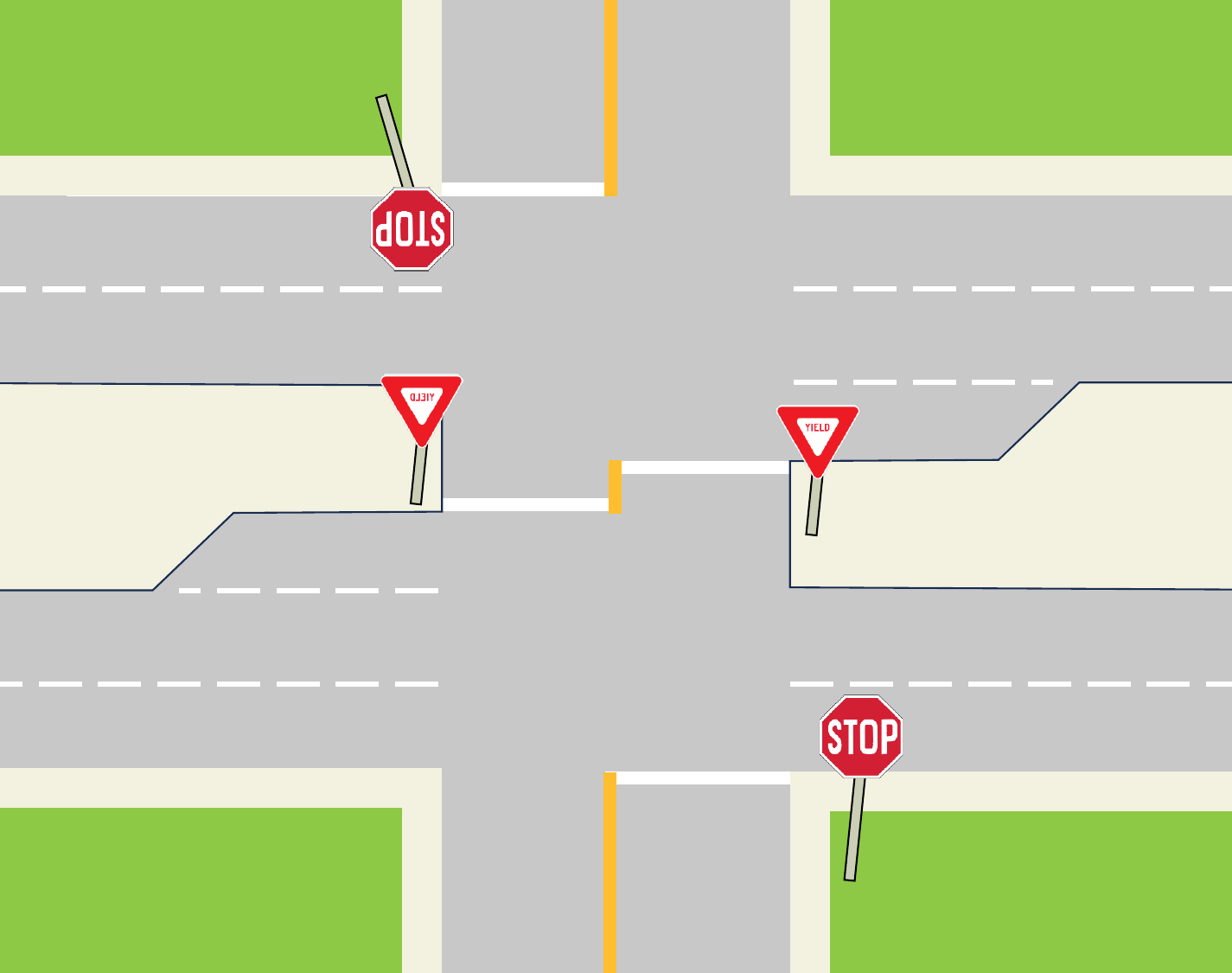}%
\caption{Illustration of two sample scenarios from the autonomous driving system. The left figure is an intersection with the traffic light. We encode this scenario in a model in Figure \ref{fig: regular_traffic_light}. The right figure is an intersection with a wide median. We encode it in a model in Figure \ref{fig: left_turn_at_wide_median}.} \label{fig:traffic_scenarios}
\end{figure}

Figure \ref{fig: regular_traffic_light} and \ref{fig: left_turn_at_wide_median} show the automaton-based models encoding the information on a regular traffic light intersection and a wide median (which we present in Figure \ref{fig:traffic_scenarios}). We construct one model for each scenario in the autonomous driving system. We integrate these models together to form a universal model representing the entire system. In this way, we can later implement the constructed controllers into the model for formal verification. We present models for other scenarios in the autonomous driving system in the Appendix.
\begin{figure}[t]
    \centering
    \begin{tikzpicture}[
    scale=.6,
    node distance=2.2cm,
    thick,
    every node/.append style={transform shape},
]

\node[state] (p0)
    at (0,0)
    {\shortstack{$p_0$}};
\node[state] (p1)
    at (9,0)
    {\shortstack{$p_1$}};
\node[state] (p2)
    at (5,-3)
    {\shortstack{$p_2$}};
\node[state] (p3)
    at (3,3)
    {\shortstack{$p_3$}};
\node[state] (p4)
    at (4,0)
    {\shortstack{$p_4$}};

\path[->,sloped]

(p0) 
edge[loop left] node[above, sloped=true]
    { green TL }
    (p0)
edge[bend left] node[above]
    { $\lnot$ green TL $\land$ car from left }
    (p1)
edge[in=-120,out=-120] node[below]
    { green TL $\land$ (opposite car $\vee$ ped at left) }
    (p2)
edge[bend left] node[above]
    { green TL $\land$ ped at right }
    (p3)
edge[bend left] node[below]
    { $\lnot$ green TL }
    (p4)

(p1) 
edge[in=-50,out=-130] node[]
    { green TL }
    (p0)
edge[in=0,out=45,loop] node[]
    { $\lnot$ green TL $\land$ car from left }
    (p1)
edge[in=-60,out=-60] node[below]
    { green TL $\land$ (opposite car $\vee$ ped at left) }
    (p2)
edge[] node[above]
    {$\neg$ green TL}
    (p4)
    
(p2) 
edge[bend right] node[below]
    { $\lnot$ green TL $\land$ car from left }
    (p1)
edge[bend left] node[below]
    { TL is green }
    (p0)

(p3) 
edge[in=45,out=-100] node[]
    { green TL }
    (p0)
edge[bend left] node[above]
    { $\lnot$ green TL $\land$ car from left }
    (p1)

(p4) 
edge[bend left] node[above]
    { green TL }
    (p0)
edge[] node[below]
    { $\lnot$ green TL $\land$ car from left }
    (p1)
edge[loop below] node[]
    { $\neg$ green TL }
    (p0)
;

\end{tikzpicture}
    \caption{
        An automaton-based model represents a vehicle's environment dynamics at a regular traffic signal at an intersection. TL represents ``traffic light," and ped represents ``pedestrian."
    }
    \label{fig: regular_traffic_light}
\end{figure}
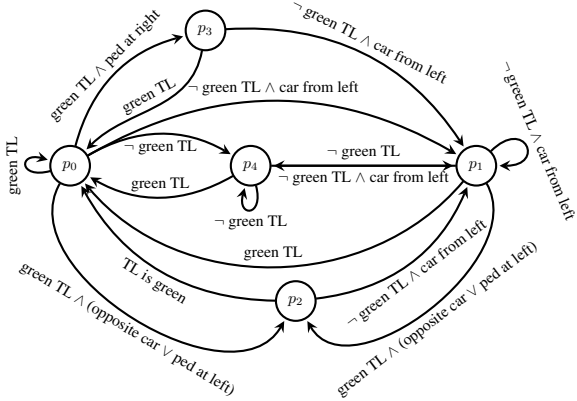

\begin{figure}[t]
    \centering
    \begin{tikzpicture}[
    scale=.6,
    node distance=2.2cm,
    thick,
    every node/.append style={transform shape},
]

\node[state] (p0)
    at (0,0)
    {\shortstack{$p_0$}};
\node[state] (p1)
    at (5,0)
    {\shortstack{$p_1$}};
\node[state] (p2)
    at (5,3)
    {\shortstack{$p_2$}};
\node[state] (p3)
    at (0,3)
    {\shortstack{$p_3$}};

\path[->,sloped]

(p0) 
edge[loop left] node[above]
    { $\sigma_1 \land \sigma_2$ }
    (p0)
edge[bend left] node[above]
    { $\sigma_1 \land \neg \sigma_2$ }
    (p1)
edge[bend left] node[below]
    { $\neg \sigma_1 \land \sigma_2$ }
    (p3)

(p1) 
edge[loop right] node[above]
    { $\sigma_1 \land \neg \sigma_2$ }
    (p1)
edge[bend left] node[below]
    { $\sigma_1 \land \sigma_2$ }
    (p0)
edge[bend left] node[below]
    { $\neg \sigma_1 \land \neg \sigma_2$ }
    (p2)
    
(p2) 
edge[loop right] node[above]
    { $\neg \sigma_1 \land \neg \sigma_2$ }
    (p0)
edge[bend left] node[above]
    { $\sigma_1 \land \neg \sigma_2$ }
    (p1)
edge[bend left] node[below]
    { $\neg \sigma_1 \land \sigma_2$ }
    (p3)

(p3) 
edge[loop left] node[above]
    { $\neg \sigma_1 \land \sigma_2$ }
    (p1)
edge[bend left] node[below]
    { $\sigma_1 \land \sigma_2$ }
    (p0)
edge[bend left] node[below]
    { $\neg \sigma_1 \land \neg \sigma_2$ }
    (p2)
;

\end{tikzpicture}
    \caption{
        An automaton-based model representing the environment dynamics of a yield-based wide median. $\sigma_1 =$ car from left and $\sigma_2 = $ car from right.
    }
    \label{fig: left_turn_at_wide_median}
\end{figure}
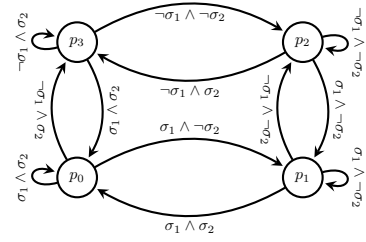

\paragraph{Examples of Externally Provided Specifications.}
For verification purposes, we generate a set of traffic rules in the form of temporal logic. We denote the traffic rules as \emph{specifications}.
Some examples from the set of specifications in the temporal logic formula are presented below:

$\Phi_1 = \lalways ( \text{pedestrian} \rightarrow (\leventually \text{stop}))$,

$\Phi_2 = \lalways ( \neg \text{turn left} \vee (\neg \text{opposite car} \vee \text{green left-turn light})$,

$\Phi_3 = \lalways ( \neg \text{green traffic light} \rightarrow \neg \text{go straight})$,

$\Phi_4 = \lalways (\text{stop sign} \rightarrow \leventually \text{stop} )$,

$\Phi_5 = \lalways \neg \text{turn right} \vee \neg(\text{car from left} \vee \text{pedestrian at right})$,

We present the full set of specifications in the Appendix.

From the provided models and specifications, we can extract a set of atomic propositions and a set of actions. We add the English vocabulary from the model's input symbols to the set of atomic propositions. We add any vocabularies from the temporal logic formulas that are not already in the proposition set to the action set.
Now, we have obtained a set of atomic propositions and allowable actions from the model and specifications. The propositions include \{ green traffic light, green left-turn light, flashing left-turn light, opposite car, car from left, car from right, pedestrian at left, pedestrian at right, pedestrian in front, stop sign \}, and the actions are \{ stop, turn left, turn right, go straight \}.

Once we have the models representing the autonomous system and a set of specifications, we can construct controllers for various tasks that can operate in the system and verify whether the controllers satisfy the specifications when operating in the system.

\begin{figure*}[t]
    \centering
    \begin{tikzpicture}[thick,scale=.6, node distance=2.2cm, every node/.style={transform shape}]
    \node[state,initial] (0) at (1, 0) {\Large $q_0$};
    \node[state] (1) at (3, 3) {\Large $q_{1}$};
    \node[state] (2) at (5, 0) {\Large $q_{2}$};
    \node[state] (3) at (8, 3) {\Large $q_{3}$};
    \node[state] (4) at (10, 0) {\Large $q_{4}$};
    \node[state] (5) at (10, 3) {\Large $q_{5}$};

    \draw[->, shorten >=1pt, sloped] (0) to[left] node[above, align=center] {\small (True, observe TL)} (1);
    
    \draw[->, shorten >=1pt, sloped] (1) to[left] node[below, align=center] {\small (TL is green , \\ go straight)} (2);
    \draw[->, shorten >=1pt, sloped] (1) to[loop above] node[above, align=center] {\small ($\neg$ TL is green, $\noop$)} ();

    \draw[->, shorten >=1pt, sloped] (2) to[left] node[above, align=center] {\small (True, observe \\ car from left)} (3);

    \draw[->, shorten >=1pt, sloped] (3) to[left] node[below, align=center] {\small ( $\neg$ car from left , \\ check ped at right)} (4);
    \draw[->, shorten >=1pt, sloped] (3) to[loop above] node[above, align=center] {\small (car from left, $\noop$)} ();

    \draw[->, shorten >=1pt] (4) to[loop right] node[align=center] {\small (\textit{ped at right}, \\ $\noop$)} ();
    \draw[->, shorten >=1pt, sloped] (4) to[left] node[above, align=center] {\small ( $\neg$ ped at right, \\ turn right)} (5);

    \node[state,initial] (10) at (15, 0) {\Large $q_0$};
    \node[state] (11) at (15, 3) {\Large $q_{1}$};
    \node[state] (12) at (19, 3) {\Large $q_{2}$};
    \node[state] (13) at (21, 0) {\Large $q_{3}$};

     \draw[->, shorten >=1pt, sloped] (10) to[left] node[above, align=center] {\small (True, \\ observe TL)} (11);

     \draw[->, shorten >=1pt, sloped] (11) to[left] node[above, align=center] {\small (True, check left car \\ $\land$ check ped at right)} (12);

     \draw[->, shorten >=1pt, sloped] (12) to[left] node[below, align=center] {\small ( $\neg$ left car \\ $\land \neg $ ped at right, turn left)} (13);
    \draw[->, shorten >=1pt, sloped] (12) to[loop above] node[above, align=center] {\small ( left car $\vee$ ped at right, $\epsilon$)} ();
\end{tikzpicture}
    \caption{
    Automaton-based controllers for the task ``turn right at the traffic light." The left controller is obtained before fine-tuning the language model, and the right controller is obtained after the fine-tuning. TL represents ``traffic light."
    }
    \label{fig: right}
\end{figure*}
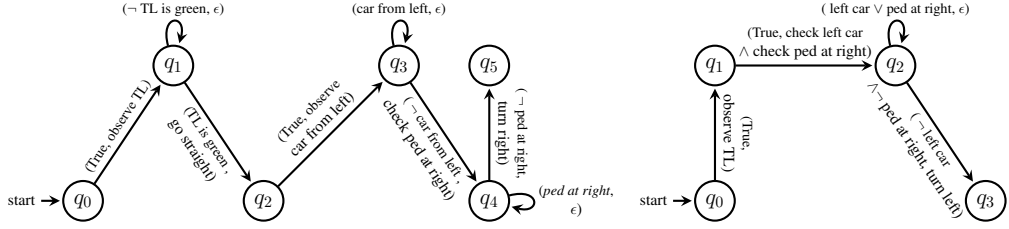

\paragraph{Example on Controller Construction: Right Turn.}
In this example, we use a simple task ``turn right at the traffic light" to demonstrate the controller construction and verification. Then, we compare the controllers constructed using the language model prior to and after the fine-tuning stage. Hence we can give a straightforward demonstration of how \method{} improves the language model.

We start by constructing a controller using the language model before fine-tuning it. First, we query the language model on the steps of turning right at the traffic light in the following format:

\begin{lstlisting}[language=completion]
    <prompt>Steps for "turn right at traffic light"</prompt>
    <completion>1. Look straight ahead and watch for traffic light.
    2. If the traffic light turns green, start moving forward.
    3. As you approach the intersection, look to your left for oncoming traffic.
    4. If there is no traffic coming from your left, check pedestrians on your right.
    5. If it is safe, turn your vehicle right. </completion>
\end{lstlisting}

Next, we again query the language model to align the step instructions to the defined propositions and actions: 
\begin{lstlisting}[language=completion]
    <prompt>Rephrase the following steps to align the defined Boolean Propositions {green traffic light, car from left, ......} and Actions {stop, turn left, turn right, go straight}:
    1. Look straight ahead and watch for traffic light.
    ......
    5. If it is safe, turn your vehicle right.
    </prompt>
    <completion>1. Observe the state of the green traffic light.
    2. If the green traffic light is on, execute the action go straight.
    3. As you approach the intersection, observe the state of the car from left.
    4. If the car from left is not present, check the state of the pedestrian at right.
    5. If the pedestrian at right is not present, execute the action turn right. </completion>
\end{lstlisting}
By doing so, we have aligned the vocabularies from the initial responses to the defined propositions and actions. Note that such alignment may also introduce mistakes that violate more specifications, but reducing the probability of an alignment mistake is also a goal for \method{}.

Then, we apply semantic parsing to break the sentence into verb phrases and keywords (e.g., if) and then shorten the phrases for presentation purposes:
\begin{lstlisting}[language=completion]
    1. <observe traffic light>.
    2. <if> <green traffic light>, <go straight>.
    3. <observe car from left>.
    4. <if> <no car from left>, <check pedestrian at right>.
    5. <if> <no pedestrian at right>, <turn right>.
\end{lstlisting}
Last, we follow the algorithm GLM2FSA \cite{Yang2022AutomatonBasedRO} to construct an FSA representing the steps of this task, as presented in the left of Figure \ref{fig: right}.
\begin{lstlisting}[language=completion]
    <prompt>Steps for "turn right at traffic light"</prompt>
    <completion>1. Observe the traffic light in front of you.
    2. Check for the left approaching car and right side pedestrian.
    3. If no car from the left is approaching and no pedestrian on the right, proceed to turn right.</completion>
\end{lstlisting}

\paragraph{Example on Formal Verification}
We first implement both controllers in the automaton-based model presented in Figure \ref{fig: regular_traffic_light}, i.e., construct a product automaton for each controller and the model.

Second, we verify both product automata against the set of provided specifications. During the verification step, the model checker finds that the controller obtained before fine-tuning fails the specification $\Phi_5$. The model checker returns a counter-example on states $(p_0, q_3), (p_4, q_4), (p_1, q_5)$. 

This counter-example captures an edge case: The traffic light turns back to red and a car is coming from the left immediately after the agent is checking or waiting for pedestrians. In this scenario, the agent does not check for the traffic light and cars from left again and directly turns right, which can lead to an accident. We argue that this edge case can hardly be caught by human inspection but can be found by the model checker. Hence we highlight this counter-example to indicate the necessity of formal verification.

In contrast, the controller obtained after fine-tuning satisfies all the specifications. Through this right-turn example, we observe the language model's enhancement through \method{}. We present more controller construction and verification examples in the Appendix.

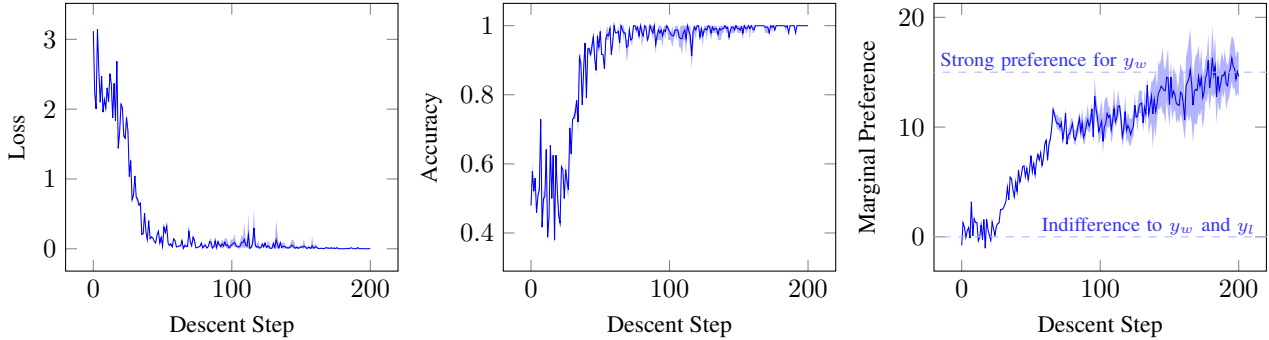
\begin{figure*}[t]
    \centering
    \subfigure{
\begin{tikzpicture}[scale=1.0]
    \begin{axis}[
      xlabel=\graphlabelsize{Descent Step},
      ylabel=\graphlabelsize{Loss},
      width=\graphwidth,
      height=\graphheight,
    ]

    \addplot [mark=none, blue] table [x=step, y=mean, col sep=comma] {figures/dpo/data_loss.csv};

    \addplot+[mark=none,name path =max, draw=none] table [x=step, y=max, col sep=comma] {figures/dpo/data_loss.csv} \closedcycle;
    \addplot+[mark=none,name path =min, draw=none] table [x=step, y=min, col sep=comma] {figures/dpo/data_loss.csv} \closedcycle;

    \addplot [blue!30] fill between [of = max and min];

    \end{axis}
\end{tikzpicture}
    \subfigure{
\begin{tikzpicture}[scale=1.0]
    \begin{axis}[
      xlabel=\graphlabelsize{Descent Step},
      ylabel=\graphlabelsize{Accuracy},
      width=\graphwidth,
      height=\graphheight,
    ]

    \addplot [mark=none, blue] table [x=step, y=mean, col sep=comma] {figures/dpo/data_accuracy.csv};

    \addplot+[mark=none,name path =max, draw=none] table [x=step, y=max, col sep=comma] {figures/dpo/data_accuracy.csv} \closedcycle;
    \addplot+[mark=none,name path =min, draw=none] table [x=step, y=min, col sep=comma] {figures/dpo/data_accuracy.csv} \closedcycle;

    \addplot [blue!30] fill between [of = max and min];

    \end{axis}
\end{tikzpicture}
    \subfigure{
\begin{tikzpicture}[scale=1.0]
    \begin{axis}[
      xlabel=\graphlabelsize{Descent Step},
      ylabel=\graphlabelsize{Marginal Preference},
      width=\graphwidth,
      height=\graphheight,
    ]

    \addplot [mark=none, blue] table [x=step, y=mean, col sep=comma] {figures/dpo/data_margins.csv};

    \addplot+[mark=none,name path =max, draw=none] table [x=step, y=max, col sep=comma] {figures/dpo/data_margins.csv} \closedcycle;
    \addplot+[mark=none,name path =min, draw=none] table [x=step, y=min, col sep=comma] {figures/dpo/data_margins.csv} \closedcycle;

    \addplot [blue!30] fill between [of = max and min];

    \draw[dashed, blue!30] (-10000, 0) -- (300, 0);
    \node[fill=none, text width=5cm, align=center, text=blue!80, font=\fontsize{7.5}{8}\selectfont] at (135,1) {Indifference to $y_w$ and $y_l$}; 

    \node[fill=none, text width=5cm, align=center, text=blue!80, font=\fontsize{7.5}{8}\selectfont] at (60,16) {Strong preference for $y_w$}; 
     \draw[dashed, blue!30] (-10000, 15) -- (300, 15);
    
    \end{axis}
\end{tikzpicture}
    \caption{ This figure shows fine-tuning statistics for Llama2-7B optimized for an autonomous driving system. All plots show the mean over five seeds. Shaded areas indicate maximum and minimum values.  Plots from left to right show the DPO losses, accuracies, and marginal preferences over different epochs, respectively.
    }
    \label{fig:dpo}
\end{figure*}

\subsection{Quantitative Evaluation}

\paragraph{Fine-tuning via DPO.}
DPO fine-tunes a language model to output responses that match desired specifications. DPO requires a data set where each data point has the form $(x, y_w, y_l)$, where $x$ is a user input (i.e., ``Steps for turn right at the traffic light"), $y_w$ and $y_l$ are the language model's text responses such that the user prefers $y_w$ over $y_l$. In our experiments, the preferred response is the one whose \gls{aut}-based representation satisfies more of the specifications than the other response. 
We collect approximately 3000 data points to fine-tune the language model.
After fine-tuning, the language model shows a preference for the responses as indicated in the training dataset. 

We measure the DPO training performance via three metrics: DPO loss, accuracy, and marginal preference.
Loss refers to the modified maximum likelihood loss function from the DPO algorithm, which is minimized via gradient descent.
Accuracy measures how often the model prefers the correct response over the incorrect response. Accuracy is the mean over the dataset of $\mathbb{I}(P(y_w | x, \theta) > P(y_l | x, \theta))$, where $\mathbb{I}$ is the indicator function returning one if the input is true and zero otherwise, and $\theta$ is the current values of the model parameters.
Marginal preference measures how strongly the model prefers the correct output compared to the original reference model. Marginal preference is calculated as the mean over the dataset of $(log(P(y_w | x, \theta)) - log(P(y_w | x, \theta_{ref}))) - (log(P(y_l | x, \theta)) - log(P(y_l | x, \theta_{ref})))$. Zero indicates indifference, positive values indicate stronger preferences for the favored answer, and negative values indicate preference for the less preferred response. 

We show the fine-tuning performance on the Llama2-7B model over the three metrics in Figure \ref{fig:dpo}. Note that the variance between random seeds is relatively small because the model starts with the same parameters, and only the order of the data changes between seeds.

\paragraph{Evaluation via Formal Verification.}

We provide an additional metric to evaluate the proposed \method{}. During the fine-tuning procedure, we save a checkpoint language model for every 20 epochs. For each checkpoint language model, we query it for various autonomous driving tasks and obtain the task controllers. Then, we verify the controllers against 15 provided specifications (presented in the Appendix) following the formal verification method in Section \ref{sec: verification}. Thus, we obtain the number of specifications being satisfied for each controller.

Figure \ref{fig: formal-verif-result} shows the relationship between the number of satisfied specifications and the number of epochs of DPO training. Simultaneously, we divide the results into two categories---training and validation---depending on whether the task is included in the training dataset. Hence, we have shown the relationships between the numbers of satisfied specifications and epochs for both training data and validation data.

For both training and validation data, we observe an increase in the number of specifications satisfied as we fine-tune for more epochs. This result indicates that our approach can improve the language model's ability to satisfy critical requirements.
Therefore, our approach can act as a starting point to guide the design process for real-world implementations of autonomous driving systems.

\paragraph{Justification for Overfitting.}
We design the method \method{} to fine-tune language models for solving domain-specific tasks rather than enhancing the language model in general. Therefore, we do expect some degree of overfitting on the language model to the domain-specific knowledge and vocabulary. In our experiments, we fine-tune the language model specifically for tasks operated in autonomous driving systems. A certain degree of overfitting provides stronger guarantees that the generated outcomes satisfy critical specifications.

\begin{figure}[t]
    \centering
    \resizebox{0.8\linewidth}{0.6\linewidth}{\input{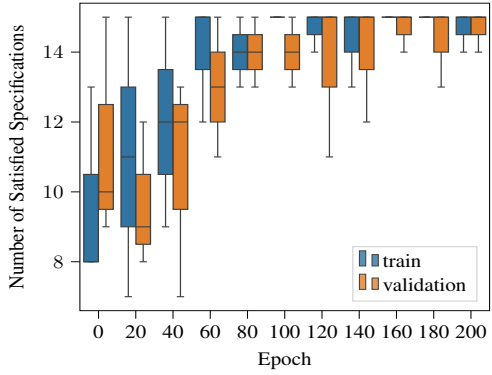}}
    \caption{The number of specifications satisfied through formal verification vs. the epoch of DPO training.}
    \label{fig: formal-verif-result}
\end{figure}

\begin{figure}[t]
    \centering
    \includegraphics[width=\linewidth]{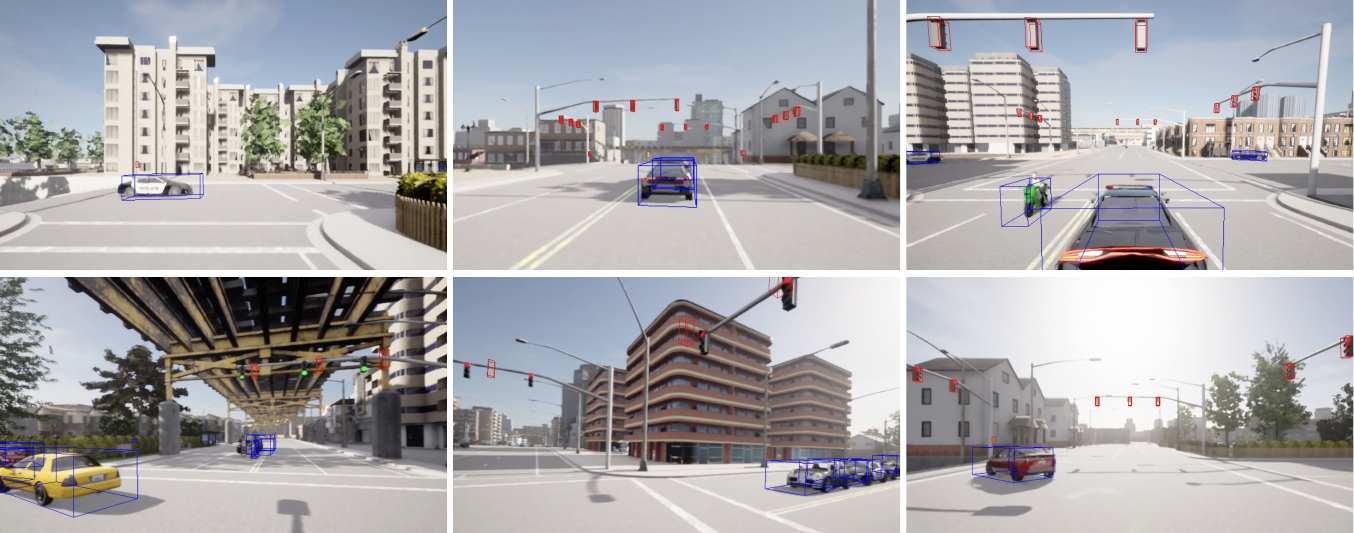}
    \caption{Visual demonstration of obtaining system information while operating the controllers. We use Carla to simulate the autonomous driving system.}
    \label{fig: carla-sim}
\end{figure}

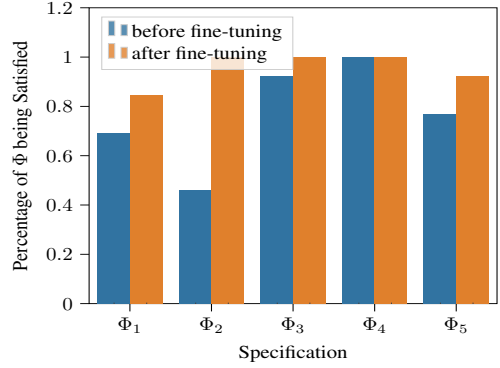
\begin{figure}[t]
    \centering
    \resizebox{0.8\linewidth}{0.6\linewidth}{
\begin{tikzpicture}

\definecolor{darkgray176}{RGB}{176,176,176}
\definecolor{darkslategray66}{RGB}{66,66,66}
\definecolor{lightgray204}{RGB}{204,204,204}
\definecolor{peru22412844}{RGB}{224,128,44}
\definecolor{steelblue49115161}{RGB}{49,115,161}

\begin{axis}[
legend cell align={left},
legend style={
  fill opacity=0.8,
  draw opacity=1,
  text opacity=1,
  at={(0.03,0.97)},
  anchor=north west,
  draw=lightgray204
},
tick align=outside,
tick pos=left,
unbounded coords=jump,
x grid style={darkgray176},
xlabel={Specification},
xmin=-0.5, xmax=4.5,
xtick style={color=black},
xtick={0,1,2,3,4},
xticklabels={
  \(\displaystyle \Phi_1\),
  \(\displaystyle \Phi_2\),
  \(\displaystyle \Phi_3\),
  \(\displaystyle \Phi_4\),
  \(\displaystyle \Phi_5\)
},
y grid style={darkgray176},
ylabel={Percentage of $\Phi$ being Satisfied},
ymin=0, ymax=1.2,
ytick style={color=black}
]
\draw[draw=none,fill=steelblue49115161] (axis cs:-0.4,0) rectangle (axis cs:0,0.692307692307692);
\addlegendimage{ybar,ybar legend,draw=none,fill=steelblue49115161}
\addlegendentry{before fine-tuning}

\draw[draw=none,fill=steelblue49115161] (axis cs:0.6,0) rectangle (axis cs:1,0.461538461538462);
\draw[draw=none,fill=steelblue49115161] (axis cs:1.6,0) rectangle (axis cs:2,0.923076923076923);
\draw[draw=none,fill=steelblue49115161] (axis cs:2.6,0) rectangle (axis cs:3,1);
\draw[draw=none,fill=steelblue49115161] (axis cs:3.6,0) rectangle (axis cs:4,0.769230769230769);
\draw[draw=none,fill=peru22412844] (axis cs:-2.77555756156289e-17,0) rectangle (axis cs:0.4,0.846153846153846);
\addlegendimage{ybar,ybar legend,draw=none,fill=peru22412844}
\addlegendentry{after fine-tuning}

\draw[draw=none,fill=peru22412844] (axis cs:1,0) rectangle (axis cs:1.4,1);
\draw[draw=none,fill=peru22412844] (axis cs:2,0) rectangle (axis cs:2.4,1);
\draw[draw=none,fill=peru22412844] (axis cs:3,0) rectangle (axis cs:3.4,1);
\draw[draw=none,fill=peru22412844] (axis cs:4,0) rectangle (axis cs:4.4,0.923076923076923);
\addplot [line width=1.08pt, darkslategray66, forget plot]
table {%
-0.2 0
-0.2 0
};
\addplot [line width=1.08pt, darkslategray66, forget plot]
table {%
0.8 0
0.8 0
};
\addplot [line width=1.08pt, darkslategray66, forget plot]
table {%
1.8 0
1.8 0
};
\addplot [line width=1.08pt, darkslategray66, forget plot]
table {%
2.8 0
2.8 0
};
\addplot [line width=1.08pt, darkslategray66, forget plot]
table {%
3.8 0
3.8 0
};
\addplot [line width=1.08pt, darkslategray66, forget plot]
table {%
0.2 0
0.2 0
};
\addplot [line width=1.08pt, darkslategray66, forget plot]
table {%
1.2 0
1.2 0
};
\addplot [line width=1.08pt, darkslategray66, forget plot]
table {%
2.2 0
2.2 0
};
\addplot [line width=1.08pt, darkslategray66, forget plot]
table {%
3.2 0
3.2 0
};
\addplot [line width=1.08pt, darkslategray66, forget plot]
table {%
4.2 0
4.2 0
};
\end{axis}

\end{tikzpicture}}
    \caption{Percentage $\mathbb{P}_{\Phi}$ of each specification $\Phi$ being satisfied during actual operations in the system.}
    \label{fig: emp-verif}
\end{figure}

\begin{figure*}[t]
    \centering
    \resizebox{0.24\linewidth}{0.24\linewidth}{
\begin{tikzpicture}

\definecolor{darkgray176}{RGB}{176,176,176}
\definecolor{darkorange25512714}{RGB}{255,127,14}
\definecolor{forestgreen4416044}{RGB}{44,160,44}
\definecolor{lightgray204}{RGB}{204,204,204}
\definecolor{steelblue31119180}{RGB}{31,119,180}

\begin{axis}[
legend cell align={left},
legend style={
  fill opacity=0.8,
  draw opacity=1,
  text opacity=1,
  at={(0.03,0.97)},
  anchor=north west,
  draw=lightgray204
},
tick align=outside,
tick pos=left,
title={Car},
x grid style={darkgray176},
xlabel={confidence},
xmin=0.071, xmax=0.709,
xtick style={color=black},
y grid style={darkgray176},
ylabel={accuracy},
ymin=-0.05, ymax=1.05,
ytick style={color=black}
]
\path [draw=steelblue31119180, fill=steelblue31119180, opacity=0.2]
(axis cs:0.1,0)
--(axis cs:0.1,0)
--(axis cs:0.12,0)
--(axis cs:0.14,0)
--(axis cs:0.16,0)
--(axis cs:0.18,0.00121951219512195)
--(axis cs:0.2,0.00612376654931323)
--(axis cs:0.22,0.0215076262320357)
--(axis cs:0.24,0.0278052173913044)
--(axis cs:0.26,0.0382182720132286)
--(axis cs:0.28,0.0908573750507924)
--(axis cs:0.3,0.0821509471744964)
--(axis cs:0.32,0.0892430316914853)
--(axis cs:0.34,0.081682159920003)
--(axis cs:0.36,0.0773044420368364)
--(axis cs:0.38,0.0756510844882938)
--(axis cs:0.4,0.11211244315833)
--(axis cs:0.42,0.0542644839196563)
--(axis cs:0.44,0.140775401069519)
--(axis cs:0.46,0.0907518091664433)
--(axis cs:0.48,0.15400641025641)
--(axis cs:0.5,0.268153314077227)
--(axis cs:0.52,0.626666666666667)
--(axis cs:0.54,1)
--(axis cs:0.56,1)
--(axis cs:0.58,1)
--(axis cs:0.6,1)
--(axis cs:0.62,1)
--(axis cs:0.64,1)
--(axis cs:0.66,1)
--(axis cs:0.68,1)
--(axis cs:0.68,1)
--(axis cs:0.68,1)
--(axis cs:0.66,1)
--(axis cs:0.64,1)
--(axis cs:0.62,1)
--(axis cs:0.6,1)
--(axis cs:0.58,1)
--(axis cs:0.56,1)
--(axis cs:0.54,1)
--(axis cs:0.52,0.835198135198135)
--(axis cs:0.5,0.418645833333333)
--(axis cs:0.48,0.352535612535612)
--(axis cs:0.46,0.241093789930999)
--(axis cs:0.44,0.238535414165666)
--(axis cs:0.42,0.147108465608466)
--(axis cs:0.4,0.186672831313505)
--(axis cs:0.38,0.128215488215488)
--(axis cs:0.36,0.167602240896359)
--(axis cs:0.34,0.103283393637871)
--(axis cs:0.32,0.128423344947735)
--(axis cs:0.3,0.111751662971175)
--(axis cs:0.28,0.108926337478843)
--(axis cs:0.26,0.063198298876265)
--(axis cs:0.24,0.059695832083958)
--(axis cs:0.22,0.02697464478841)
--(axis cs:0.2,0.0112500860268048)
--(axis cs:0.18,0.00998611795262191)
--(axis cs:0.16,0.00465116279069767)
--(axis cs:0.14,0)
--(axis cs:0.12,0)
--(axis cs:0.1,0)
--cycle;

\path [draw=darkorange25512714, fill=darkorange25512714, opacity=0.2]
(axis cs:0.1,0)
--(axis cs:0.1,0)
--(axis cs:0.12,0)
--(axis cs:0.14,0)
--(axis cs:0.16,0)
--(axis cs:0.18,0)
--(axis cs:0.2,0.00387351778656126)
--(axis cs:0.22,0.0168439716312057)
--(axis cs:0.24,0.0247134497267495)
--(axis cs:0.26,0.0375849660135946)
--(axis cs:0.28,0.0690526315789474)
--(axis cs:0.3,0.0488757977417771)
--(axis cs:0.32,0.105382018732468)
--(axis cs:0.34,0.0614502737150369)
--(axis cs:0.36,0.0670320197044335)
--(axis cs:0.38,0.124871794871795)
--(axis cs:0.4,0.089744513593187)
--(axis cs:0.42,0.0657142857142857)
--(axis cs:0.44,0.0863492063492063)
--(axis cs:0.46,0.127634496440658)
--(axis cs:0.48,0.114420677361854)
--(axis cs:0.5,0.461542805021066)
--(axis cs:0.52,0.555555555555556)
--(axis cs:0.54,0.9)
--(axis cs:0.56,1)
--(axis cs:0.58,1)
--(axis cs:0.6,1)
--(axis cs:0.62,1)
--(axis cs:0.64,1)
--(axis cs:0.66,1)
--(axis cs:0.68,1)
--(axis cs:0.68,1)
--(axis cs:0.68,1)
--(axis cs:0.66,1)
--(axis cs:0.64,1)
--(axis cs:0.62,1)
--(axis cs:0.6,1)
--(axis cs:0.58,1)
--(axis cs:0.56,1)
--(axis cs:0.54,1)
--(axis cs:0.52,0.898412698412698)
--(axis cs:0.5,0.648416149068323)
--(axis cs:0.48,0.264836559139785)
--(axis cs:0.46,0.266243194192378)
--(axis cs:0.44,0.201256038647343)
--(axis cs:0.42,0.151794871794872)
--(axis cs:0.4,0.178888888888889)
--(axis cs:0.38,0.175252439122722)
--(axis cs:0.36,0.162867777383906)
--(axis cs:0.34,0.117857634219624)
--(axis cs:0.32,0.14670219853431)
--(axis cs:0.3,0.104141712454212)
--(axis cs:0.28,0.103511877037727)
--(axis cs:0.26,0.0663352229201777)
--(axis cs:0.24,0.047450964153105)
--(axis cs:0.22,0.0289811491576939)
--(axis cs:0.2,0.0156382236689142)
--(axis cs:0.18,0.0151692815854666)
--(axis cs:0.16,0.00652671755725191)
--(axis cs:0.14,0.0072289156626506)
--(axis cs:0.12,0)
--(axis cs:0.1,0)
--cycle;

\path [draw=forestgreen4416044, fill=forestgreen4416044, opacity=0.2]
(axis cs:0.1,0)
--(axis cs:0.1,0)
--(axis cs:0.12,0)
--(axis cs:0.14,0)
--(axis cs:0.16,0)
--(axis cs:0.18,0.00600004139937548)
--(axis cs:0.2,0.0200001125351621)
--(axis cs:0.22,0.0340003059022269)
--(axis cs:0.24,0.0480008315280277)
--(axis cs:0.26,0.0620022603242979)
--(axis cs:0.28,0.0760061441746022)
--(axis cs:0.3,0.0900167014218481)
--(axis cs:0.32,0.104045397868702)
--(axis cs:0.34,0.118123394575986)
--(axis cs:0.36,0.132335350130466)
--(axis cs:0.38,0.146911051194401)
--(axis cs:0.4,0.162472623156635)
--(axis cs:0.42,0.180692850924285)
--(axis cs:0.44,0.205986209962092)
--(axis cs:0.46,0.249425873177567)
--(axis cs:0.48,0.335202922022118)
--(axis cs:0.5,0.498941421369996)
--(axis cs:0.52,0.744)
--(axis cs:0.54,0.989058578630005)
--(axis cs:0.56,1)
--(axis cs:0.58,1)
--(axis cs:0.6,1)
--(axis cs:0.62,1)
--(axis cs:0.64,1)
--(axis cs:0.66,1)
--(axis cs:0.68,1)
--(axis cs:0.68,1)
--(axis cs:0.68,1)
--(axis cs:0.66,1)
--(axis cs:0.64,1)
--(axis cs:0.62,1)
--(axis cs:0.6,1)
--(axis cs:0.58,1)
--(axis cs:0.56,1)
--(axis cs:0.54,0.989058578630005)
--(axis cs:0.52,0.744)
--(axis cs:0.5,0.498941421369996)
--(axis cs:0.48,0.335202922022118)
--(axis cs:0.46,0.249425873177567)
--(axis cs:0.44,0.205986209962092)
--(axis cs:0.42,0.180692850924285)
--(axis cs:0.4,0.162472623156635)
--(axis cs:0.38,0.146911051194401)
--(axis cs:0.36,0.132335350130466)
--(axis cs:0.34,0.118123394575986)
--(axis cs:0.32,0.104045397868702)
--(axis cs:0.3,0.0900167014218481)
--(axis cs:0.28,0.0760061441746022)
--(axis cs:0.26,0.0620022603242979)
--(axis cs:0.24,0.0480008315280277)
--(axis cs:0.22,0.0340003059022269)
--(axis cs:0.2,0.0200001125351621)
--(axis cs:0.18,0.00600004139937548)
--(axis cs:0.16,0)
--(axis cs:0.14,0)
--(axis cs:0.12,0)
--(axis cs:0.1,0)
--cycle;

\addplot [semithick, steelblue31119180]
table {%
0.1 0
0.12 0
0.14 0
0.16 0.00155038759689922
0.18 0.00536621408934971
0.2 0.0086769595106504
0.22 0.0238647832450631
0.24 0.0454220889555222
0.26 0.051649285444242
0.28 0.0994964572712879
0.3 0.0973068908476814
0.32 0.10883318831961
0.34 0.0921008630806416
0.36 0.119247557744658
0.38 0.105152493931564
0.4 0.152462258374336
0.42 0.101047585875172
0.44 0.182251991705773
0.46 0.158542195184283
0.48 0.252952279202279
0.5 0.34578012567143
0.52 0.724895104895105
0.54 1
0.56 1
0.58 1
0.6 1
0.62 1
0.64 1
0.66 1
0.68 1
};
\addlegendentry{Real}
\addplot [semithick, darkorange25512714]
table {%
0.1 0
0.12 0
0.14 0.00240963855421687
0.16 0.00319338422391858
0.18 0.00659785301403799
0.2 0.00926567464930636
0.22 0.0223390324091942
0.24 0.0378526054377259
0.26 0.0519127902506514
0.28 0.0858084148424158
0.3 0.0750510114169908
0.32 0.126384991040951
0.34 0.0896515612456812
0.36 0.114257707874799
0.38 0.150029296350051
0.4 0.130584306874841
0.42 0.113278388278388
0.44 0.14589690502734
0.46 0.200435169274973
0.48 0.188992467368179
0.5 0.564895973591626
0.52 0.726984126984127
0.54 0.966666666666667
0.56 1
0.58 1
0.6 1
0.62 1
0.64 1
0.66 1
0.68 1
};
\addlegendentry{Simulation}
\addplot [semithick, forestgreen4416044, dotted]
table {%
0.1 0
0.12 0
0.14 0
0.16 0
0.18 0.00600004139937548
0.2 0.0200001125351621
0.22 0.0340003059022269
0.24 0.0480008315280277
0.26 0.0620022603242979
0.28 0.0760061441746022
0.3 0.0900167014218481
0.32 0.104045397868702
0.34 0.118123394575986
0.36 0.132335350130466
0.38 0.146911051194401
0.4 0.162472623156635
0.42 0.180692850924285
0.44 0.205986209962092
0.46 0.249425873177567
0.48 0.335202922022118
0.5 0.498941421369996
0.52 0.744
0.54 0.989058578630005
0.56 1
0.58 1
0.6 1
0.62 1
0.64 1
0.66 1
0.68 1
};
\addlegendentry{Smoothed Estimation}
\end{axis}

\end{tikzpicture}}
    \resizebox{0.24\linewidth}{0.24\linewidth}{
\begin{tikzpicture}

\definecolor{darkgray176}{RGB}{176,176,176}
\definecolor{darkorange25512714}{RGB}{255,127,14}
\definecolor{forestgreen4416044}{RGB}{44,160,44}
\definecolor{lightgray204}{RGB}{204,204,204}
\definecolor{steelblue31119180}{RGB}{31,119,180}

\begin{axis}[
legend cell align={left},
legend style={
  fill opacity=0.8,
  draw opacity=1,
  text opacity=1,
  at={(0.03,0.97)},
  anchor=north west,
  draw=lightgray204
},
tick align=outside,
tick pos=left,
title={Pedestrian},
x grid style={darkgray176},
xlabel={confidence},
xmin=0.071, xmax=0.709,
xtick style={color=black},
y grid style={darkgray176},
ymin=-0.05, ymax=1.05,
ytick style={color=black}
]
\path [draw=steelblue31119180, fill=steelblue31119180, opacity=0.2]
(axis cs:0.1,0)
--(axis cs:0.1,0)
--(axis cs:0.12,0)
--(axis cs:0.14,0)
--(axis cs:0.16,0)
--(axis cs:0.18,0.00108108108108108)
--(axis cs:0.2,0.00124223602484472)
--(axis cs:0.22,0.0101257975857518)
--(axis cs:0.24,0.0196)
--(axis cs:0.26,0.0306590379317652)
--(axis cs:0.28,0.0562719545601031)
--(axis cs:0.3,0.058938033140504)
--(axis cs:0.32,0.101845740129569)
--(axis cs:0.34,0.0836393713813069)
--(axis cs:0.36,0.0834625645994832)
--(axis cs:0.38,0.102722619100279)
--(axis cs:0.4,0.125254901960784)
--(axis cs:0.42,0.0825891568748711)
--(axis cs:0.44,0.10887784679089)
--(axis cs:0.46,0.135396825396825)
--(axis cs:0.48,0.0726190476190476)
--(axis cs:0.5,0.519576923076923)
--(axis cs:0.52,0.766666666666667)
--(axis cs:0.54,1)
--(axis cs:0.56,1)
--(axis cs:0.58,1)
--(axis cs:0.6,1)
--(axis cs:0.62,1)
--(axis cs:0.64,1)
--(axis cs:0.66,1)
--(axis cs:0.68,1)
--(axis cs:0.68,1)
--(axis cs:0.68,1)
--(axis cs:0.66,1)
--(axis cs:0.64,1)
--(axis cs:0.62,1)
--(axis cs:0.6,1)
--(axis cs:0.58,1)
--(axis cs:0.56,1)
--(axis cs:0.54,1)
--(axis cs:0.52,0.971428571428572)
--(axis cs:0.5,0.883076923076923)
--(axis cs:0.48,0.28959965437788)
--(axis cs:0.46,0.267143049932524)
--(axis cs:0.44,0.197616161616162)
--(axis cs:0.42,0.125255102040816)
--(axis cs:0.4,0.152896174863388)
--(axis cs:0.38,0.172198581560284)
--(axis cs:0.36,0.142012794865383)
--(axis cs:0.34,0.139715123586091)
--(axis cs:0.32,0.151352244105867)
--(axis cs:0.3,0.135712765957447)
--(axis cs:0.28,0.0953105873666621)
--(axis cs:0.26,0.0629618287976059)
--(axis cs:0.24,0.0414789065687474)
--(axis cs:0.22,0.0183576641513938)
--(axis cs:0.2,0.0121254247760272)
--(axis cs:0.18,0.00928205128205128)
--(axis cs:0.16,0.00909090909090909)
--(axis cs:0.14,0)
--(axis cs:0.12,0)
--(axis cs:0.1,0)
--cycle;

\path [draw=darkorange25512714, fill=darkorange25512714, opacity=0.2]
(axis cs:0.1,0)
--(axis cs:0.1,0)
--(axis cs:0.12,0)
--(axis cs:0.14,0)
--(axis cs:0.16,0)
--(axis cs:0.18,0.00228571428571429)
--(axis cs:0.2,0.0148320257913038)
--(axis cs:0.22,0.0277546777546778)
--(axis cs:0.24,0.0436236270564051)
--(axis cs:0.26,0.0786899371069182)
--(axis cs:0.28,0.100386734263218)
--(axis cs:0.3,0.0733731173628872)
--(axis cs:0.32,0.105578843147473)
--(axis cs:0.34,0.0888723729303439)
--(axis cs:0.36,0.119807036247335)
--(axis cs:0.38,0.13206282090003)
--(axis cs:0.4,0.11090648452652)
--(axis cs:0.42,0.0879201278674501)
--(axis cs:0.44,0.103475935828877)
--(axis cs:0.46,0.100193221412734)
--(axis cs:0.48,0.0872380952380953)
--(axis cs:0.5,0.35)
--(axis cs:0.52,0.254285714285714)
--(axis cs:0.54,0.85)
--(axis cs:0.56,1)
--(axis cs:0.58,1)
--(axis cs:0.6,1)
--(axis cs:0.62,1)
--(axis cs:0.64,1)
--(axis cs:0.66,1)
--(axis cs:0.68,1)
--(axis cs:0.68,1)
--(axis cs:0.68,1)
--(axis cs:0.66,1)
--(axis cs:0.64,1)
--(axis cs:0.62,1)
--(axis cs:0.6,1)
--(axis cs:0.58,1)
--(axis cs:0.56,1)
--(axis cs:0.54,1)
--(axis cs:0.52,0.828571428571429)
--(axis cs:0.5,0.494117647058824)
--(axis cs:0.48,0.260606060606061)
--(axis cs:0.46,0.230786435786436)
--(axis cs:0.44,0.159259981402046)
--(axis cs:0.42,0.189435177055897)
--(axis cs:0.4,0.18658690594882)
--(axis cs:0.38,0.171160565260932)
--(axis cs:0.36,0.178166666666667)
--(axis cs:0.34,0.121870286576169)
--(axis cs:0.32,0.223182795698925)
--(axis cs:0.3,0.150077519379845)
--(axis cs:0.28,0.13353480841878)
--(axis cs:0.26,0.119645307443366)
--(axis cs:0.24,0.0885361856858751)
--(axis cs:0.22,0.0375544949275812)
--(axis cs:0.2,0.0220124427546876)
--(axis cs:0.18,0.0175584415584416)
--(axis cs:0.16,0.0144133412745682)
--(axis cs:0.14,0.00654577620521893)
--(axis cs:0.12,0)
--(axis cs:0.1,0)
--cycle;

\path [draw=forestgreen4416044, fill=forestgreen4416044, opacity=0.2]
(axis cs:0.1,0)
--(axis cs:0.1,0)
--(axis cs:0.12,0)
--(axis cs:0.14,0)
--(axis cs:0.16,0)
--(axis cs:0.18,0.00600004139937548)
--(axis cs:0.2,0.0200001125351621)
--(axis cs:0.22,0.0340003059022269)
--(axis cs:0.24,0.0480008315280277)
--(axis cs:0.26,0.0620022603242979)
--(axis cs:0.28,0.0760061441746022)
--(axis cs:0.3,0.0900167014218481)
--(axis cs:0.32,0.104045397868702)
--(axis cs:0.34,0.118123394575986)
--(axis cs:0.36,0.132335350130466)
--(axis cs:0.38,0.146911051194401)
--(axis cs:0.4,0.162472623156635)
--(axis cs:0.42,0.180692850924285)
--(axis cs:0.44,0.205986209962092)
--(axis cs:0.46,0.249425873177567)
--(axis cs:0.48,0.335202922022118)
--(axis cs:0.5,0.498941421369996)
--(axis cs:0.52,0.744)
--(axis cs:0.54,0.989058578630005)
--(axis cs:0.56,1)
--(axis cs:0.58,1)
--(axis cs:0.6,1)
--(axis cs:0.62,1)
--(axis cs:0.64,1)
--(axis cs:0.66,1)
--(axis cs:0.68,1)
--(axis cs:0.68,1)
--(axis cs:0.68,1)
--(axis cs:0.66,1)
--(axis cs:0.64,1)
--(axis cs:0.62,1)
--(axis cs:0.6,1)
--(axis cs:0.58,1)
--(axis cs:0.56,1)
--(axis cs:0.54,0.989058578630005)
--(axis cs:0.52,0.744)
--(axis cs:0.5,0.498941421369996)
--(axis cs:0.48,0.335202922022118)
--(axis cs:0.46,0.249425873177567)
--(axis cs:0.44,0.205986209962092)
--(axis cs:0.42,0.180692850924285)
--(axis cs:0.4,0.162472623156635)
--(axis cs:0.38,0.146911051194401)
--(axis cs:0.36,0.132335350130466)
--(axis cs:0.34,0.118123394575986)
--(axis cs:0.32,0.104045397868702)
--(axis cs:0.3,0.0900167014218481)
--(axis cs:0.28,0.0760061441746022)
--(axis cs:0.26,0.0620022603242979)
--(axis cs:0.24,0.0480008315280277)
--(axis cs:0.22,0.0340003059022269)
--(axis cs:0.2,0.0200001125351621)
--(axis cs:0.18,0.00600004139937548)
--(axis cs:0.16,0)
--(axis cs:0.14,0)
--(axis cs:0.12,0)
--(axis cs:0.1,0)
--cycle;

\addplot [semithick, steelblue31119180]
table {%
0.1 0
0.12 0
0.14 0
0.16 0.00303030303030303
0.18 0.00502979902979903
0.2 0.0061387451382213
0.22 0.0141603523764422
0.24 0.0302936102446664
0.26 0.0486360471990971
0.28 0.074850595354946
0.3 0.090479850965781
0.32 0.128440176852404
0.34 0.110069213940182
0.36 0.111383109666305
0.38 0.14126393453868
0.4 0.137758919961427
0.42 0.10355476534048
0.44 0.153247004203526
0.46 0.191127176916651
0.48 0.179877112135177
0.5 0.703076923076923
0.52 0.871428571428572
0.54 1
0.56 1
0.58 1
0.6 1
0.62 1
0.64 1
0.66 1
0.68 1
};
\addlegendentry{Real}
\addplot [semithick, darkorange25512714]
table {%
0.1 0
0.12 0
0.14 0.0031844316674038
0.16 0.00571768910065515
0.18 0.00935064935064935
0.2 0.0184201274108122
0.22 0.0325159862025293
0.24 0.0635532924277976
0.26 0.093605603507426
0.28 0.116384403208472
0.3 0.109804185914339
0.32 0.158320384160657
0.34 0.106057786882635
0.36 0.145022506515044
0.38 0.152396989655252
0.4 0.146794272477521
0.42 0.134156142365098
0.44 0.130654946137743
0.46 0.154541055150811
0.48 0.168709956709957
0.5 0.433006535947712
0.52 0.587619047619048
0.54 0.925
0.56 1
0.58 1
0.6 1
0.62 1
0.64 1
0.66 1
0.68 1
};
\addlegendentry{Simulation}
\addplot [semithick, forestgreen4416044, dotted]
table {%
0.1 0
0.12 0
0.14 0
0.16 0
0.18 0.00600004139937548
0.2 0.0200001125351621
0.22 0.0340003059022269
0.24 0.0480008315280277
0.26 0.0620022603242979
0.28 0.0760061441746022
0.3 0.0900167014218481
0.32 0.104045397868702
0.34 0.118123394575986
0.36 0.132335350130466
0.38 0.146911051194401
0.4 0.162472623156635
0.42 0.180692850924285
0.44 0.205986209962092
0.46 0.249425873177567
0.48 0.335202922022118
0.5 0.498941421369996
0.52 0.744
0.54 0.989058578630005
0.56 1
0.58 1
0.6 1
0.62 1
0.64 1
0.66 1
0.68 1
};
\addlegendentry{Smoothed Estimation}
\end{axis}

\end{tikzpicture}}
    \resizebox{0.24\linewidth}{0.24\linewidth}{
\begin{tikzpicture}

\definecolor{darkgray176}{RGB}{176,176,176}
\definecolor{darkorange25512714}{RGB}{255,127,14}
\definecolor{forestgreen4416044}{RGB}{44,160,44}
\definecolor{lightgray204}{RGB}{204,204,204}
\definecolor{steelblue31119180}{RGB}{31,119,180}

\begin{axis}[
legend cell align={left},
legend style={
  fill opacity=0.8,
  draw opacity=1,
  text opacity=1,
  at={(0.03,0.97)},
  anchor=north west,
  draw=lightgray204
},
tick align=outside,
tick pos=left,
title={Traffic Light},
x grid style={darkgray176},
xlabel={confidence},
xmin=0.071, xmax=0.709,
xtick style={color=black},
y grid style={darkgray176},
ymin=-0.05, ymax=1.05,
ytick style={color=black}
]
\path [draw=steelblue31119180, fill=steelblue31119180, opacity=0.2]
(axis cs:0.1,0)
--(axis cs:0.1,0)
--(axis cs:0.12,0)
--(axis cs:0.14,0)
--(axis cs:0.16,0.00125)
--(axis cs:0.18,0)
--(axis cs:0.2,0.00132450331125828)
--(axis cs:0.22,0.014707819355172)
--(axis cs:0.24,0.0276931266218644)
--(axis cs:0.26,0.0504568426276178)
--(axis cs:0.28,0.09735232668566)
--(axis cs:0.3,0.105182072829132)
--(axis cs:0.32,0.0753777705345502)
--(axis cs:0.34,0.111359715615035)
--(axis cs:0.36,0.0862116606797458)
--(axis cs:0.38,0.121)
--(axis cs:0.4,0.149691833590139)
--(axis cs:0.42,0.117023255813953)
--(axis cs:0.44,0.108905784536037)
--(axis cs:0.46,0.165687426556992)
--(axis cs:0.48,0.141666666666667)
--(axis cs:0.5,0.478787878787879)
--(axis cs:0.52,0.664615384615385)
--(axis cs:0.54,1)
--(axis cs:0.56,1)
--(axis cs:0.58,1)
--(axis cs:0.6,1)
--(axis cs:0.62,1)
--(axis cs:0.64,1)
--(axis cs:0.66,1)
--(axis cs:0.68,1)
--(axis cs:0.68,1)
--(axis cs:0.68,1)
--(axis cs:0.66,1)
--(axis cs:0.64,1)
--(axis cs:0.62,1)
--(axis cs:0.6,1)
--(axis cs:0.58,1)
--(axis cs:0.56,1)
--(axis cs:0.54,1)
--(axis cs:0.52,0.960059829059829)
--(axis cs:0.5,0.713939393939394)
--(axis cs:0.48,0.417857142857143)
--(axis cs:0.46,0.24249893025246)
--(axis cs:0.44,0.25250300120048)
--(axis cs:0.42,0.167640840691456)
--(axis cs:0.4,0.214182259947619)
--(axis cs:0.38,0.227666666666667)
--(axis cs:0.36,0.201465201465201)
--(axis cs:0.34,0.164848484848485)
--(axis cs:0.32,0.191339396444812)
--(axis cs:0.3,0.166246554846838)
--(axis cs:0.28,0.11058599393125)
--(axis cs:0.26,0.0957926808394098)
--(axis cs:0.24,0.0706505832500755)
--(axis cs:0.22,0.033314678937114)
--(axis cs:0.2,0.0113877944504988)
--(axis cs:0.18,0.00790872840053168)
--(axis cs:0.16,0.00682826841974696)
--(axis cs:0.14,0)
--(axis cs:0.12,0)
--(axis cs:0.1,0)
--cycle;

\path [draw=darkorange25512714, fill=darkorange25512714, opacity=0.2]
(axis cs:0.1,0)
--(axis cs:0.1,0)
--(axis cs:0.12,0)
--(axis cs:0.14,0)
--(axis cs:0.16,0)
--(axis cs:0.18,0.00105820105820106)
--(axis cs:0.2,0.00124223602484472)
--(axis cs:0.22,0.00860697413446752)
--(axis cs:0.24,0.0195918367346939)
--(axis cs:0.26,0.0367450731087095)
--(axis cs:0.28,0.0510106023009249)
--(axis cs:0.3,0.0623842092029168)
--(axis cs:0.32,0.0616141803904966)
--(axis cs:0.34,0.0844444444444444)
--(axis cs:0.36,0.0675213675213675)
--(axis cs:0.38,0.0817540792540793)
--(axis cs:0.4,0.187506060606061)
--(axis cs:0.42,0.117813725490196)
--(axis cs:0.44,0.171774193548387)
--(axis cs:0.46,0.192496392496392)
--(axis cs:0.48,0.233333333333333)
--(axis cs:0.5,0.6)
--(axis cs:0.52,0.751428571428571)
--(axis cs:0.54,1)
--(axis cs:0.56,1)
--(axis cs:0.58,1)
--(axis cs:0.6,1)
--(axis cs:0.62,1)
--(axis cs:0.64,1)
--(axis cs:0.66,1)
--(axis cs:0.68,1)
--(axis cs:0.68,1)
--(axis cs:0.68,1)
--(axis cs:0.66,1)
--(axis cs:0.64,1)
--(axis cs:0.62,1)
--(axis cs:0.6,1)
--(axis cs:0.58,1)
--(axis cs:0.56,1)
--(axis cs:0.54,1)
--(axis cs:0.52,0.946428571428572)
--(axis cs:0.5,0.861111111111111)
--(axis cs:0.48,0.565686274509804)
--(axis cs:0.46,0.361111111111111)
--(axis cs:0.44,0.253863636363636)
--(axis cs:0.42,0.189758229501732)
--(axis cs:0.4,0.237408282551921)
--(axis cs:0.38,0.19270442219441)
--(axis cs:0.36,0.131541218637993)
--(axis cs:0.34,0.122431281455672)
--(axis cs:0.32,0.130881976118485)
--(axis cs:0.3,0.135372549019608)
--(axis cs:0.28,0.0781718683698404)
--(axis cs:0.26,0.0610010727908455)
--(axis cs:0.24,0.0385402269295835)
--(axis cs:0.22,0.0192834599649328)
--(axis cs:0.2,0.0067311831960699)
--(axis cs:0.18,0.00711640211640212)
--(axis cs:0.16,0.00508474576271186)
--(axis cs:0.14,0)
--(axis cs:0.12,0)
--(axis cs:0.1,0)
--cycle;

\path [draw=forestgreen4416044, fill=forestgreen4416044, opacity=0.2]
(axis cs:0.1,0)
--(axis cs:0.1,0)
--(axis cs:0.12,0)
--(axis cs:0.14,0)
--(axis cs:0.16,0)
--(axis cs:0.18,0.00600004139937548)
--(axis cs:0.2,0.0200001125351621)
--(axis cs:0.22,0.0340003059022269)
--(axis cs:0.24,0.0480008315280277)
--(axis cs:0.26,0.0620022603242979)
--(axis cs:0.28,0.0760061441746022)
--(axis cs:0.3,0.0900167014218481)
--(axis cs:0.32,0.104045397868702)
--(axis cs:0.34,0.118123394575986)
--(axis cs:0.36,0.132335350130466)
--(axis cs:0.38,0.146911051194401)
--(axis cs:0.4,0.162472623156635)
--(axis cs:0.42,0.180692850924285)
--(axis cs:0.44,0.205986209962092)
--(axis cs:0.46,0.249425873177567)
--(axis cs:0.48,0.335202922022118)
--(axis cs:0.5,0.498941421369996)
--(axis cs:0.52,0.744)
--(axis cs:0.54,0.989058578630005)
--(axis cs:0.56,1)
--(axis cs:0.58,1)
--(axis cs:0.6,1)
--(axis cs:0.62,1)
--(axis cs:0.64,1)
--(axis cs:0.66,1)
--(axis cs:0.68,1)
--(axis cs:0.68,1)
--(axis cs:0.68,1)
--(axis cs:0.66,1)
--(axis cs:0.64,1)
--(axis cs:0.62,1)
--(axis cs:0.6,1)
--(axis cs:0.58,1)
--(axis cs:0.56,1)
--(axis cs:0.54,0.989058578630005)
--(axis cs:0.52,0.744)
--(axis cs:0.5,0.498941421369996)
--(axis cs:0.48,0.335202922022118)
--(axis cs:0.46,0.249425873177567)
--(axis cs:0.44,0.205986209962092)
--(axis cs:0.42,0.180692850924285)
--(axis cs:0.4,0.162472623156635)
--(axis cs:0.38,0.146911051194401)
--(axis cs:0.36,0.132335350130466)
--(axis cs:0.34,0.118123394575986)
--(axis cs:0.32,0.104045397868702)
--(axis cs:0.3,0.0900167014218481)
--(axis cs:0.28,0.0760061441746022)
--(axis cs:0.26,0.0620022603242979)
--(axis cs:0.24,0.0480008315280277)
--(axis cs:0.22,0.0340003059022269)
--(axis cs:0.2,0.0200001125351621)
--(axis cs:0.18,0.00600004139937548)
--(axis cs:0.16,0)
--(axis cs:0.14,0)
--(axis cs:0.12,0)
--(axis cs:0.1,0)
--cycle;

\addplot [semithick, steelblue31119180]
table {%
0.1 0
0.12 0
0.14 0
0.16 0.0040681592656909
0.18 0.00353714370107813
0.2 0.00635614888087853
0.22 0.0210600875814853
0.24 0.04845227059719
0.26 0.0729824216993882
0.28 0.103449691637053
0.3 0.139285151811729
0.32 0.122877857983273
0.34 0.139060006192038
0.36 0.138359512827598
0.38 0.17084126984127
0.4 0.181937046768879
0.42 0.142169381917672
0.44 0.179147881374772
0.46 0.207125478998002
0.48 0.259507936507937
0.5 0.602424242424242
0.52 0.842393162393162
0.54 1
0.56 1
0.58 1
0.6 1
0.62 1
0.64 1
0.66 1
0.68 1
};
\addlegendentry{Real}
\addplot [semithick, darkorange25512714]
table {%
0.1 0
0.12 0
0.14 0
0.16 0.00169491525423729
0.18 0.00418472173946626
0.2 0.00398670961045731
0.22 0.0138758926574806
0.24 0.0287995213450241
0.26 0.0499553003814367
0.28 0.0635341740132943
0.3 0.0971406298990792
0.32 0.0946897387765221
0.34 0.10566008517228
0.36 0.10339123242349
0.38 0.137229250724244
0.4 0.212279898851718
0.42 0.154363474281395
0.44 0.209364944123009
0.46 0.270779220779221
0.48 0.379019607843137
0.5 0.738888888888889
0.52 0.859761904761905
0.54 1
0.56 1
0.58 1
0.6 1
0.62 1
0.64 1
0.66 1
0.68 1
};
\addlegendentry{Simulation}
\addplot [semithick, forestgreen4416044, dotted]
table {%
0.1 0
0.12 0
0.14 0
0.16 0
0.18 0.00600004139937548
0.2 0.0200001125351621
0.22 0.0340003059022269
0.24 0.0480008315280277
0.26 0.0620022603242979
0.28 0.0760061441746022
0.3 0.0900167014218481
0.32 0.104045397868702
0.34 0.118123394575986
0.36 0.132335350130466
0.38 0.146911051194401
0.4 0.162472623156635
0.42 0.180692850924285
0.44 0.205986209962092
0.46 0.249425873177567
0.48 0.335202922022118
0.5 0.498941421369996
0.52 0.744
0.54 0.989058578630005
0.56 1
0.58 1
0.6 1
0.62 1
0.64 1
0.66 1
0.68 1
};
\addlegendentry{Smoothed Estimation}
\end{axis}

\end{tikzpicture}}
    \resizebox{0.24\linewidth}{0.24\linewidth}{
\begin{tikzpicture}

\definecolor{darkgray176}{RGB}{176,176,176}
\definecolor{darkorange25512714}{RGB}{255,127,14}
\definecolor{forestgreen4416044}{RGB}{44,160,44}
\definecolor{lightgray204}{RGB}{204,204,204}
\definecolor{steelblue31119180}{RGB}{31,119,180}

\begin{axis}[
legend cell align={left},
legend style={
  fill opacity=0.8,
  draw opacity=1,
  text opacity=1,
  at={(0.03,0.97)},
  anchor=north west,
  draw=lightgray204
},
tick align=outside,
tick pos=left,
title={Overall},
x grid style={darkgray176},
xlabel={confidence},
xmin=0.071, xmax=0.709,
xtick style={color=black},
y grid style={darkgray176},
ymin=-0.05, ymax=1.05,
ytick style={color=black}
]
\path [draw=steelblue31119180, fill=steelblue31119180, opacity=0.2]
(axis cs:0.1,0)
--(axis cs:0.1,0)
--(axis cs:0.12,0)
--(axis cs:0.14,0)
--(axis cs:0.16,0.000740740740740741)
--(axis cs:0.18,0.00225315889522532)
--(axis cs:0.2,0.00671065261018991)
--(axis cs:0.22,0.0202357331328279)
--(axis cs:0.24,0.0437376720455624)
--(axis cs:0.26,0.0720280443191584)
--(axis cs:0.28,0.106922309682537)
--(axis cs:0.3,0.10166632005823)
--(axis cs:0.32,0.1090617249146)
--(axis cs:0.34,0.11819014591376)
--(axis cs:0.36,0.112749676793794)
--(axis cs:0.38,0.0967528425723354)
--(axis cs:0.4,0.158091633953703)
--(axis cs:0.42,0.102527115263776)
--(axis cs:0.44,0.10344537815126)
--(axis cs:0.46,0.122011363636364)
--(axis cs:0.48,0.13934875228241)
--(axis cs:0.5,0.408916408668731)
--(axis cs:0.52,0.652380952380952)
--(axis cs:0.54,0.9)
--(axis cs:0.56,1)
--(axis cs:0.58,1)
--(axis cs:0.6,1)
--(axis cs:0.62,1)
--(axis cs:0.64,1)
--(axis cs:0.66,1)
--(axis cs:0.68,1)
--(axis cs:0.68,1)
--(axis cs:0.68,1)
--(axis cs:0.66,1)
--(axis cs:0.64,1)
--(axis cs:0.62,1)
--(axis cs:0.6,1)
--(axis cs:0.58,1)
--(axis cs:0.56,1)
--(axis cs:0.54,1)
--(axis cs:0.52,0.907142857142857)
--(axis cs:0.5,0.544)
--(axis cs:0.48,0.291815103135858)
--(axis cs:0.46,0.170398689516129)
--(axis cs:0.44,0.183921568627451)
--(axis cs:0.42,0.154490983588912)
--(axis cs:0.4,0.189546545408614)
--(axis cs:0.38,0.174765442575255)
--(axis cs:0.36,0.17623165729268)
--(axis cs:0.34,0.153968027405081)
--(axis cs:0.32,0.189314560912084)
--(axis cs:0.3,0.145844917774702)
--(axis cs:0.28,0.129736689486588)
--(axis cs:0.26,0.0945028682481738)
--(axis cs:0.24,0.0645277168806765)
--(axis cs:0.22,0.0297624741943566)
--(axis cs:0.2,0.0141686411149826)
--(axis cs:0.18,0.00828638022458768)
--(axis cs:0.16,0.00565324257893742)
--(axis cs:0.14,0.00359281437125749)
--(axis cs:0.12,0)
--(axis cs:0.1,0)
--cycle;

\path [draw=darkorange25512714, fill=darkorange25512714, opacity=0.2]
(axis cs:0.1,0)
--(axis cs:0.1,0)
--(axis cs:0.12,0)
--(axis cs:0.14,0)
--(axis cs:0.16,0.000934579439252336)
--(axis cs:0.18,0.00108695652173913)
--(axis cs:0.2,0.00348991241979947)
--(axis cs:0.22,0.0188967782511938)
--(axis cs:0.24,0.0362019487026416)
--(axis cs:0.26,0.0478534204648854)
--(axis cs:0.28,0.0907010619977038)
--(axis cs:0.3,0.0698307176893695)
--(axis cs:0.32,0.109730848861284)
--(axis cs:0.34,0.0851064601676807)
--(axis cs:0.36,0.0849350649350649)
--(axis cs:0.38,0.152405323471966)
--(axis cs:0.4,0.144255744255744)
--(axis cs:0.42,0.112048548466459)
--(axis cs:0.44,0.135221750826714)
--(axis cs:0.46,0.180581699346405)
--(axis cs:0.48,0.153100934153566)
--(axis cs:0.5,0.519047619047619)
--(axis cs:0.52,0.71)
--(axis cs:0.54,0.925)
--(axis cs:0.56,1)
--(axis cs:0.58,1)
--(axis cs:0.6,1)
--(axis cs:0.62,1)
--(axis cs:0.64,1)
--(axis cs:0.66,1)
--(axis cs:0.68,1)
--(axis cs:0.68,1)
--(axis cs:0.68,1)
--(axis cs:0.66,1)
--(axis cs:0.64,1)
--(axis cs:0.62,1)
--(axis cs:0.6,1)
--(axis cs:0.58,1)
--(axis cs:0.56,1)
--(axis cs:0.54,1)
--(axis cs:0.52,0.928846153846154)
--(axis cs:0.5,0.748051948051948)
--(axis cs:0.48,0.338487394957983)
--(axis cs:0.46,0.308566335625159)
--(axis cs:0.44,0.210348241424229)
--(axis cs:0.42,0.181350626118068)
--(axis cs:0.4,0.206337662337662)
--(axis cs:0.38,0.203618021322939)
--(axis cs:0.36,0.188018433179723)
--(axis cs:0.34,0.141089542574936)
--(axis cs:0.32,0.143054148749801)
--(axis cs:0.3,0.119185759674777)
--(axis cs:0.28,0.105774153260211)
--(axis cs:0.26,0.0706806809129369)
--(axis cs:0.24,0.0493105629348514)
--(axis cs:0.22,0.0285889045701488)
--(axis cs:0.2,0.0126984126984127)
--(axis cs:0.18,0.0111626811594203)
--(axis cs:0.16,0.00531843723411181)
--(axis cs:0.14,0.0038961038961039)
--(axis cs:0.12,0)
--(axis cs:0.1,0)
--cycle;

\path [draw=forestgreen4416044, fill=forestgreen4416044, opacity=0.2]
(axis cs:0.1,0)
--(axis cs:0.1,0)
--(axis cs:0.12,0)
--(axis cs:0.14,0)
--(axis cs:0.16,0)
--(axis cs:0.18,0.00600004139937548)
--(axis cs:0.2,0.0200001125351621)
--(axis cs:0.22,0.0340003059022269)
--(axis cs:0.24,0.0480008315280277)
--(axis cs:0.26,0.0620022603242979)
--(axis cs:0.28,0.0760061441746022)
--(axis cs:0.3,0.0900167014218481)
--(axis cs:0.32,0.104045397868702)
--(axis cs:0.34,0.118123394575986)
--(axis cs:0.36,0.132335350130466)
--(axis cs:0.38,0.146911051194401)
--(axis cs:0.4,0.162472623156635)
--(axis cs:0.42,0.180692850924285)
--(axis cs:0.44,0.205986209962092)
--(axis cs:0.46,0.249425873177567)
--(axis cs:0.48,0.335202922022118)
--(axis cs:0.5,0.498941421369996)
--(axis cs:0.52,0.744)
--(axis cs:0.54,0.989058578630005)
--(axis cs:0.56,1)
--(axis cs:0.58,1)
--(axis cs:0.6,1)
--(axis cs:0.62,1)
--(axis cs:0.64,1)
--(axis cs:0.66,1)
--(axis cs:0.68,1)
--(axis cs:0.68,1)
--(axis cs:0.68,1)
--(axis cs:0.66,1)
--(axis cs:0.64,1)
--(axis cs:0.62,1)
--(axis cs:0.6,1)
--(axis cs:0.58,1)
--(axis cs:0.56,1)
--(axis cs:0.54,0.989058578630005)
--(axis cs:0.52,0.744)
--(axis cs:0.5,0.498941421369996)
--(axis cs:0.48,0.335202922022118)
--(axis cs:0.46,0.249425873177567)
--(axis cs:0.44,0.205986209962092)
--(axis cs:0.42,0.180692850924285)
--(axis cs:0.4,0.162472623156635)
--(axis cs:0.38,0.146911051194401)
--(axis cs:0.36,0.132335350130466)
--(axis cs:0.34,0.118123394575986)
--(axis cs:0.32,0.104045397868702)
--(axis cs:0.3,0.0900167014218481)
--(axis cs:0.28,0.0760061441746022)
--(axis cs:0.26,0.0620022603242979)
--(axis cs:0.24,0.0480008315280277)
--(axis cs:0.22,0.0340003059022269)
--(axis cs:0.2,0.0200001125351621)
--(axis cs:0.18,0.00600004139937548)
--(axis cs:0.16,0)
--(axis cs:0.14,0)
--(axis cs:0.12,0)
--(axis cs:0.1,0)
--cycle;

\addplot [semithick, steelblue31119180]
table {%
0.1 0
0.12 0
0.14 0.00119760479041916
0.16 0.00314195079935296
0.18 0.0056164916100147
0.2 0.0103466579326969
0.22 0.0253719463640708
0.24 0.0541264941260082
0.26 0.082888245452045
0.28 0.118285613615225
0.3 0.126776604939201
0.32 0.142133069089188
0.34 0.133244565606569
0.36 0.14857521462119
0.38 0.135301085888851
0.4 0.172500408362477
0.42 0.128837722889793
0.44 0.143501400560224
0.46 0.14684518572825
0.48 0.214640127573786
0.5 0.47056346749226
0.52 0.779761904761905
0.54 0.966666666666667
0.56 1
0.58 1
0.6 1
0.62 1
0.64 1
0.66 1
0.68 1
};
\addlegendentry{Real}
\addplot [semithick, darkorange25512714]
table {%
0.1 0
0.12 0
0.14 0.0012987012987013
0.16 0.00312650833668207
0.18 0.00598973429951691
0.2 0.00822443885091383
0.22 0.0234286289761105
0.24 0.0418758907578313
0.26 0.0601938600588576
0.28 0.0982376076289574
0.3 0.0913320415771569
0.32 0.126379996814779
0.34 0.112101422463083
0.36 0.129426057813155
0.38 0.175511345390176
0.4 0.171876123876124
0.42 0.144623299297024
0.44 0.170919955567158
0.46 0.246979034037858
0.48 0.237258386608232
0.5 0.642024642024642
0.52 0.813846153846154
0.54 0.975
0.56 1
0.58 1
0.6 1
0.62 1
0.64 1
0.66 1
0.68 1
};
\addlegendentry{Simulation}
\addplot [semithick, forestgreen4416044, dotted]
table {%
0.1 0
0.12 0
0.14 0
0.16 0
0.18 0.00600004139937548
0.2 0.0200001125351621
0.22 0.0340003059022269
0.24 0.0480008315280277
0.26 0.0620022603242979
0.28 0.0760061441746022
0.3 0.0900167014218481
0.32 0.104045397868702
0.34 0.118123394575986
0.36 0.132335350130466
0.38 0.146911051194401
0.4 0.162472623156635
0.42 0.180692850924285
0.44 0.205986209962092
0.46 0.249425873177567
0.48 0.335202922022118
0.5 0.498941421369996
0.52 0.744
0.54 0.989058578630005
0.56 1
0.58 1
0.6 1
0.62 1
0.64 1
0.66 1
0.68 1
};
\addlegendentry{Smoothed Estimation}
\end{axis}

\end{tikzpicture}}
    \caption{Quantitative comparison between the vision-language model's performance on detecting different objects within the Carla simulator and the real world.}
    \label{fig: stats}
\end{figure*}
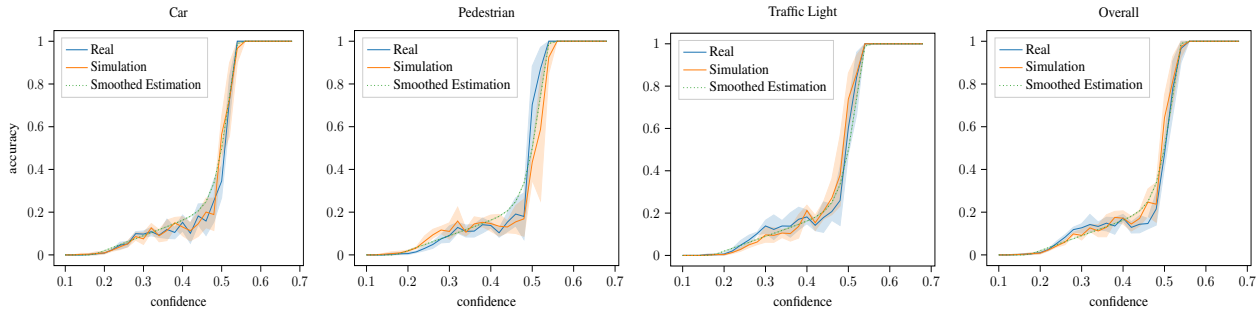

\begin{figure}[t]
    \centering
    \includegraphics[width=\linewidth]{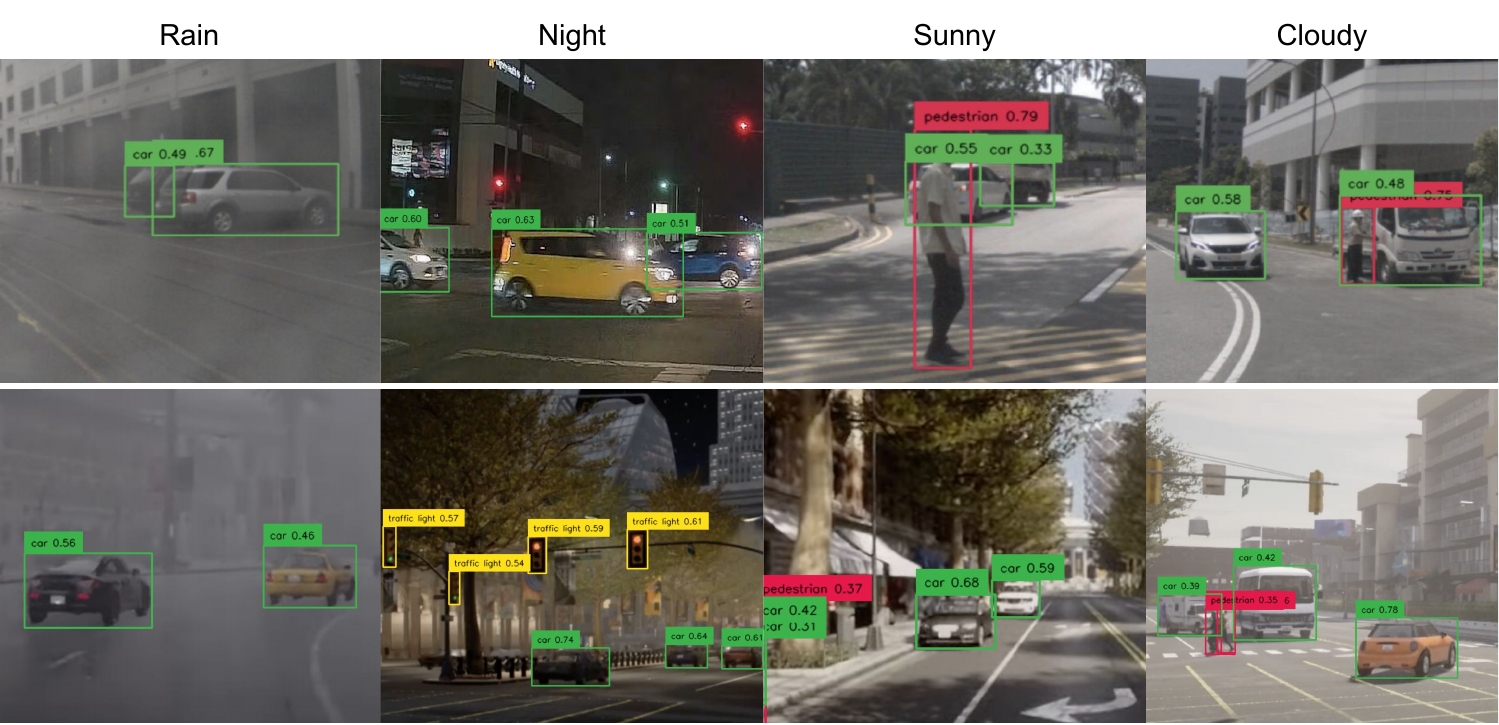}
    \caption{Comparison between the vision-language model's performance on the Carla simulator (bottom row) and the real world (top row) under different weather or light conditions.}
    \label{fig: compare}
\end{figure}

\paragraph{Empirical Evaluation in a Simulated System.}
We have presented another approach to obtain feedback via empirical evaluation in Section \ref{sec: verification}. We will show consistency between feedback from empirical evaluation and formal verification.

As we obtain the controllers through the proposed method, we operate the controllers in the Carla simulator to collect operation data. Carla is a simulator for the autonomous driving system. 
During each operation of each controller, we obtain a sequence of propositions and actions---in the form of $(2^P \times 2^{P_A})^N$. The propositions come from the information returned by the autonomous system, and the actions come from the controller. 
The Carla simulator allows for the extraction of system information.
We present visual demonstrations of extracting the propositions from the system in Figure \ref{fig: carla-sim}.
Then, we verify the sequence against the provided specifications, as we described in Section \ref{sec: verification} under Empirical Evaluation.
We operate the controllers multiple times in the system and verify the sequences against the specifications. For each specification, we get a percentage of the number of sequences satisfying this specification.

Figure \ref{fig: emp-verif} compares these percentages obtained before fine-tuning and after fine-tuning. Note that we run multiple controllers and collect multiple sequences for each controller. We show the results for the first five specifications as presented in Section \ref{sec: empirical}.

We observe that the percentages after fine-tuning are consistently higher than before fine-tuning among all five specifications, which means all the specifications have a higher probability of being satisfied for a given execution after fine-tuning.
In Figure \ref{fig: formal-verif-result}, we show that outputs from the fine-tuned model (at epoch 200) satisfy more specifications compared to the pre-trained model (at epoch 0).
Hence, we obtain consistent feedback from the formal verification and empirical evaluation. Therefore, if we are unable to obtain automaton-based models for the system, empirical evaluation is a substitute for formal verification and is able to provide feedback consistent with formal verification.

From another perspective, this result provides additional evidence to show the effectiveness of the method \method{}, as it improves the probability of all the specifications being satisfied during operation.

\subsection{Grounding Controller to Real-World System}
We have the verified controllers constructed from the language model's outputs. In this section, we will show the real-world applicability of the generated controllers.

Note that the controllers make decisions solely based on the input symbols, which are composed of atomic propositions that describe environment observations. Furthermore, our models of the autonomous driving system enforce these propositions to be visual observations of the driving environment. The following statements hold true:

1. The controller's decisions are solely based on visual observations collected from the environment.

2. Suppose we have a vision model to perceive the environment. Consider a scenario wherein the vision model performs consistently in simulation and reality, e.g., object detection accuracies are approximately equal. The controllers behave consistently in simulation and reality in such a scenario.

3. If Statement 2 holds and if the controllers satisfy the critical specifications in simulation, then the controllers also satisfy the specifications in reality.

From the three statements, we know that if we show the vision model performs consistently in the simulation and reality, we can directly transfer the controllers to the real-world system with the same degree of formal guarantee (e.g., safety). To do so, we select a current state-of-the-art object detection model---Grounded SAM \cite{kirillov2023segmentanything, liu2023grounding}---to examine whether it performs consistently in both settings.

We extract images from the Carla simulator to build a simulation dataset and use NuImage \cite{nuimages} as the real-world driving dataset. Then, we apply the Grounded SAM to detect objects within the images from the two datasets and record the detection accuracies. We present some sample detection results in Figure \ref{fig: compare}.
Next, we group the detection accuracies by the vision model's confidence score and obtain the confidence-accuracy mapping. This step follows the confidence calibration method proposed in an existing paper \cite{VideoSearch}. We argue that if the vision model's detection accuracies in both settings are approximately the same under all the confidence levels, then we say it performs consistently. We show the confidence-accuracy mappings in Figure \ref{fig: stats}.

Through experimental results, we show that the vision model performs consistently in the simulation and reality. Therefore, we can directly transfer the controllers constructed from the language model to real-world driving systems. This real-world applicability emphasizes the necessity and importance of the proposed approach because \method{} can fine-tune the language model to satisfy safety specifications, which are critical in real-world driving systems.

\section{Conclusions}

We develop a method of fine-tuning pre-trained language models via automated feedback for domain-specific tasks, such as control tasks in autonomous systems. The method converts the outputs from the pre-trained language model to automaton-based controllers. Then, it verifies how many of the externally provided specifications are satisfied by each controller. We rank the pre-trained language model's outputs by the number of satisfied specifications and feed these ranked outputs to the DPO algorithm for fine-tuning. We substitute human feedback with automated feedback using formal methods, which significantly decreases labor intensity. 

We provide empirical evidence on a simulated autonomous driving system to demonstrate the effectiveness of the proposed method: The fine-tuned language model satisfies more specifications compared with the model before fine-tuning. We additionally show the potential real-world applicability of the controllers, which justifies the necessity of enforcing the language model to produce controllers that satisfy safety-critical specifications.

\bibliography{references}

\begin{thebibliography}{30}
\providecommand{\natexlab}[1]{#1}
\providecommand{\url}[1]{\texttt{#1}}
\expandafter\ifx\csname urlstyle\endcsname\relax
  \providecommand{\doi}[1]{doi: #1}\else
  \providecommand{\doi}{doi: \begingroup \urlstyle{rm}\Url}\fi

\bibitem[Baral et~al.(2011)Baral, Dzifcak, Gonzalez, and Zhou]{Baral2011UsingIL}
Baral, C., Dzifcak, J., Gonzalez, M.~A., and Zhou, J.
\newblock Using inverse lambda and generalization to translate english to formal languages.
\newblock In Bos, J. and Pulman, S. (eds.), \emph{International Conference on Computational Semantics}, pp.\  35--44, Oxford, {UK}, 2011. The Association for Computer Linguistics.

\bibitem[Bhagavatula et~al.(2023)Bhagavatula, Hwang, Downey, Bras, Lu, Qin, Sakaguchi, Swayamdipta, West, and Choi]{Bhagavatula2023I2D2}
Bhagavatula, C., Hwang, J.~D., Downey, D., Bras, R.~L., Lu, X., Qin, L., Sakaguchi, K., Swayamdipta, S., West, P., and Choi, Y.
\newblock {I2D2:} inductive knowledge distillation with neurologic and self-imitation.
\newblock In \emph{Association for Computational Linguistics}, pp.\  9614--9630, Toronto, Canada, 2023. Association for Computational Linguistics.

\bibitem[Caesar et~al.(2020)Caesar, Bankiti, Lang, Vora, Liong, Xu, Krishnan, Pan, Baldan, and Beijbom]{nuimages}
Caesar, H., Bankiti, V., Lang, A.~H., Vora, S., Liong, V.~E., Xu, Q., Krishnan, A., Pan, Y., Baldan, G., and Beijbom, O.
\newblock nuscenes: {A} multimodal dataset for autonomous driving.
\newblock In \emph{{IEEE/CVF} Conference on Computer Vision and Pattern Recognition}, pp.\  11618--11628. Computer Vision Foundation / {IEEE}, 2020.

\bibitem[Censi et~al.(2019)Censi, Slutsky, Wongpiromsarn, Yershov, Pendleton, Fu, and Frazzoli]{censi2019liability}
Censi, A., Slutsky, K., Wongpiromsarn, T., Yershov, D., Pendleton, S., Fu, J., and Frazzoli, E.
\newblock Liability, ethics, and culture-aware behavior specification using rulebooks.
\newblock \emph{arXiv preprint arXiv:1902.09355}, 2019.

\bibitem[Chen et~al.(2020)Chen, Hou, Cui, Che, Liu, and Yu]{Chen2020FtDPLM}
Chen, S., Hou, Y., Cui, Y., Che, W., Liu, T., and Yu, X.
\newblock Recall and learn: Fine-tuning deep pretrained language models with less forgetting.
\newblock In \emph{Conference on Empirical Methods in Natural Language Processing}, pp.\  7870--7881, Online, 2020. Association for Computational Linguistics.

\bibitem[Chen et~al.(2023)Chen, Deng, and Du]{Chen2023Trusta}
Chen, Z., Deng, Y., and Du, W.
\newblock Trusta: Reasoning about assurance cases with formal methods and large language models.
\newblock \emph{arXiv preprint arXiv:2309.12941}, 2023.

\bibitem[Christiano et~al.(2017)Christiano, Leike, Brown, Martic, Legg, and Amodei]{Christiano2017RLHF}
Christiano, P.~F., Leike, J., Brown, T.~B., Martic, M., Legg, S., and Amodei, D.
\newblock Deep reinforcement learning from human preferences.
\newblock In \emph{Advances in Neural Information Processing Systems}, pp.\  4299--4307, Long Beach, CA, {USA}, 2017.

\bibitem[Cimatti et~al.(2002)Cimatti, Clarke, Giunchiglia, Giunchiglia, Pistore, Roveri, Sebastiani, and Tacchella]{Cimatti2002NuSMV}
Cimatti, A., Clarke, E.~M., Giunchiglia, E., Giunchiglia, F., Pistore, M., Roveri, M., Sebastiani, R., and Tacchella, A.
\newblock Nu{SMV} 2: An opensource tool for symbolic model checking.
\newblock In Brinksma, E. and Larsen, K.~G. (eds.), \emph{Computer Aided Verification}, volume 2404 of \emph{Lecture Notes in Computer Science}, pp.\  359--364, Copenhagen, Denmark, 2002. Springer.

\bibitem[Dosovitskiy et~al.(2017)Dosovitskiy, Ros, Codevilla, Lopez, and Koltun]{carla}
Dosovitskiy, A., Ros, G., Codevilla, F., Lopez, A., and Koltun, V.
\newblock {CARLA}: {An} open urban driving simulator.
\newblock In \emph{Conference on Robot Learning}, pp.\  1--16, 2017.

\bibitem[First et~al.(2023)First, Rabe, Ringer, and Brun]{First2023Baldur}
First, E., Rabe, M.~N., Ringer, T., and Brun, Y.
\newblock Baldur: Whole-proof generation and repair with large language models.
\newblock \emph{arXiv preprint arXiv.2303.04910}, 2023.

\bibitem[Ghosh et~al.(2016)Ghosh, Elenius, Li, Lincoln, Shankar, and Steiner]{Ghosh2014ARSENALAR}
Ghosh, S., Elenius, D., Li, W., Lincoln, P., Shankar, N., and Steiner, W.
\newblock {ARSENAL:} automatic requirements specification extraction from natural language.
\newblock In Rayadurgam, S. and Tkachuk, O. (eds.), \emph{{NASA} Formal Methods}, volume 9690 of \emph{Lecture Notes in Computer Science}, pp.\  41--46, Minneapolis, MN, USA, 2016. Springer.

\bibitem[Hahn et~al.(2022)Hahn, Schmitt, Tillman, Metzger, Siber, and Finkbeiner]{Hahn2022Formal2NatLang}
Hahn, C., Schmitt, F., Tillman, J.~J., Metzger, N., Siber, J., and Finkbeiner, B.
\newblock Formal specifications from natural language.
\newblock \emph{arXiv preprint arXiv.2206.01962}, 2022.

\bibitem[Hu et~al.(2021)Hu, Shen, Wallis, Allen{-}Zhu, Li, Wang, and Chen]{lora}
Hu, E.~J., Shen, Y., Wallis, P., Allen{-}Zhu, Z., Li, Y., Wang, S., and Chen, W.
\newblock Lora: Low-rank adaptation of large language models.
\newblock \emph{arXiv preprint arXiv:2106.09685}, 2021.

\bibitem[Janssen et~al.(2023)Janssen, Kazemier, and Besselink]{janssen2023use}
Janssen, B.~V., Kazemier, G., and Besselink, M.~G.
\newblock The use of chatgpt and other large language models in surgical science.
\newblock \emph{BJS open}, 7\penalty0 (2):\penalty0 zrad032, 2023.

\bibitem[Jha et~al.(2023)Jha, Jha, Lincoln, Bastian, Velasquez, and Neema]{Jha2023LLMFM}
Jha, S., Jha, S.~K., Lincoln, P., Bastian, N.~D., Velasquez, A., and Neema, S.
\newblock Dehallucinating large language models using formal methods guided iterative prompting.
\newblock In \emph{{IEEE} International Conference on Assured Autonomy}, pp.\  149--152, Laurel, MD, USA, 2023. {IEEE}.

\bibitem[Jung et~al.(2023)Jung, West, Jiang, Brahman, Lu, Fisher, Sorensen, and Choi]{Jung2023Impossible}
Jung, J., West, P., Jiang, L., Brahman, F., Lu, X., Fisher, J., Sorensen, T., and Choi, Y.
\newblock Impossible distillation: from low-quality model to high-quality dataset {\&} model for summarization and paraphrasing.
\newblock \emph{arXiv preprint arXiv:2305.16635}, 2023.

\bibitem[Kirillov et~al.(2023)Kirillov, Mintun, Ravi, Mao, Rolland, Gustafson, Xiao, Whitehead, Berg, Lo, Doll{\'a}r, and Girshick]{kirillov2023segmentanything}
Kirillov, A., Mintun, E., Ravi, N., Mao, H., Rolland, C., Gustafson, L., Xiao, T., Whitehead, S., Berg, A.~C., Lo, W.-Y., Doll{\'a}r, P., and Girshick, R.
\newblock Segment anything.
\newblock \emph{arXiv preprint arXiv:2304.02643}, 2023.

\bibitem[Li et~al.(2023)Li, Hessel, Yu, Ren, Chang, and Choi]{Li2023symbolic}
Li, L.~H., Hessel, J., Yu, Y., Ren, X., Chang, K., and Choi, Y.
\newblock Symbolic chain-of-thought distillation: Small models can also "think" step-by-step.
\newblock In \emph{Association for Computational Linguistics}, pp.\  2665--2679, Toronto, Canada, 2023. Association for Computational Linguistics.

\bibitem[Liu et~al.(2023)Liu, Zeng, Ren, Li, Zhang, Yang, Li, Yang, Su, Zhu, and Zhang]{liu2023grounding}
Liu, S., Zeng, Z., Ren, T., Li, F., Zhang, H., Yang, J., Li, C., Yang, J., Su, H., Zhu, J., and Zhang, L.
\newblock Grounding {DINO:} marrying {DINO} with grounded pre-training for open-set object detection.
\newblock \emph{arXiv preprint arXiv:2303.05499}, 2023.

\bibitem[Ouyang et~al.(2022)Ouyang, Wu, Jiang, Almeida, Wainwright, Mishkin, Zhang, Agarwal, Slama, Ray, Schulman, Hilton, Kelton, Miller, Simens, Askell, Welinder, Christiano, Leike, and Lowe]{Ouyang2022FollowInstructions}
Ouyang, L., Wu, J., Jiang, X., Almeida, D., Wainwright, C.~L., Mishkin, P., Zhang, C., Agarwal, S., Slama, K., Ray, A., Schulman, J., Hilton, J., Kelton, F., Miller, L., Simens, M., Askell, A., Welinder, P., Christiano, P.~F., Leike, J., and Lowe, R.
\newblock Training language models to follow instructions with human feedback.
\newblock In \emph{Advances in Neural Information Processing Systems}, New Orleans, LA, USA, 2022.

\bibitem[Pnueli(1977)]{Pnueli77LTL}
Pnueli, A.
\newblock The temporal logic of programs.
\newblock In \emph{Symposium on Foundations of Computer Science}, pp.\  46--57, Providence, Rhode Island, USA, 1977. {IEEE} Computer Society.

\bibitem[Rafailov et~al.(2023)Rafailov, Sharma, Mitchell, Ermon, Manning, and Finn]{Rafailov2023DPO}
Rafailov, R., Sharma, A., Mitchell, E., Ermon, S., Manning, C.~D., and Finn, C.
\newblock Direct preference optimization: Your language model is secretly a reward model.
\newblock \emph{arXiv preprint arXiv:2305.18290}, 2023.

\bibitem[Sadoun et~al.(2013)Sadoun, Dubois, Ghamri{-}Doudane, and Grau]{Sadoun2013FromNL}
Sadoun, D., Dubois, C., Ghamri{-}Doudane, Y., and Grau, B.
\newblock From natural language requirements to formal specification using an ontology.
\newblock In \emph{International Conference on Tools with Artificial Intelligence}, pp.\  755--760, Herndon, VA, {USA}, 2013. {IEEE} Computer Society.

\bibitem[Seff et~al.(2023)Seff, Cera, Chen, Ng, Zhou, Nayakanti, Refaat, Al{-}Rfou, and Sapp]{seff2023motionlm}
Seff, A., Cera, B., Chen, D., Ng, M., Zhou, A., Nayakanti, N., Refaat, K.~S., Al{-}Rfou, R., and Sapp, B.
\newblock Motionlm: Multi-agent motion forecasting as language modeling.
\newblock \emph{arXiv preprint arXiv:2309.16534}, 2023.

\bibitem[Stiennon et~al.(2020)Stiennon, Ouyang, Wu, Ziegler, Lowe, Voss, Radford, Amodei, and Christiano]{RLHF}
Stiennon, N., Ouyang, L., Wu, J., Ziegler, D.~M., Lowe, R., Voss, C., Radford, A., Amodei, D., and Christiano, P.~F.
\newblock Learning to summarize from human feedback.
\newblock \emph{arXiv preprint arXiv:2009.01325}, 2020.

\bibitem[Tikayat~Ray et~al.(2023)Tikayat~Ray, Bhat, White, Nguyen, Pinon~Fischer, and Mavris]{aerospace10090770}
Tikayat~Ray, A., Bhat, A.~P., White, R.~T., Nguyen, V.~M., Pinon~Fischer, O.~J., and Mavris, D.~N.
\newblock Examining the potential of generative language models for aviation safety analysis: Case study and insights using the aviation safety reporting system (asrs).
\newblock \emph{Aerospace}, 10\penalty0 (9), 2023.

\bibitem[Wu et~al.(2022)Wu, Jiang, Li, Rabe, Staats, Jamnik, and Szegedy]{Wu2022Autoformalization}
Wu, Y., Jiang, A.~Q., Li, W., Rabe, M.~N., Staats, C., Jamnik, M., and Szegedy, C.
\newblock Autoformalization with large language models.
\newblock In \emph{Advances in Neural Information Processing Systems}, New Orleans, LA, USA, 2022.

\bibitem[Yang et~al.(2022)Yang, Gaglione, Neary, and Topcu]{Yang2022AutomatonBasedRO}
Yang, Y., Gaglione, J.-R., Neary, C., and Topcu, U.
\newblock Automaton-based representations of task knowledge from generative language models.
\newblock \emph{arXiv preprint arXiv:2212.01944}, 2022.

\bibitem[Yang et~al.(2023)Yang, Gaglione, Chinchali, and Topcu]{VideoSearch}
Yang, Y., Gaglione, J., Chinchali, S., and Topcu, U.
\newblock Specification-driven video search via foundation models and formal verification.
\newblock \emph{arXiv preprint arXiv:2309.10171}, 2023.

\bibitem[Zhou et~al.(2023)Zhou, Bras, and Choi]{Zhou2023commonsense}
Zhou, W., Bras, R.~L., and Choi, Y.
\newblock Commonsense knowledge transfer for pre-trained language models.
\newblock In \emph{Findings of the Association for Computational Linguistics}, pp.\  5946--5960. Association for Computational Linguistics, 2023.

\end{thebibliography}
\bibliographystyle{mlsys2024}

\newpage
\appendix

\onecolumn


\section{Additional Background and Definitions}

\paragraph{Temporal Logic.} 
Temporal logic is a formalism that describes properties of sequences over time, specifically in systems' behaviors. It employs logical operators that capture temporal aspects such as "always," "eventually," and "until" to specify constraints and requirements on system executions. Formally, temporal logic formulas are defined inductively as:
$
\varphi \coloneqq \Autprop\in\AutProps[model] \mid \lnot\varphi \mid \varphi \lor \varphi \mid \lnext\varphi \mid \varphi\luntil\varphi
$
Intuitively, a temporal logic formula consists of a set of atomic propositions, a set of temporal operators, and a set of logical connectives.

The common-used temporal operators are $\leventually$ (``eventually''), $\luntil$ (``until"), $\lnext$ (``next"), and $\lalways$ (``always''). And the logical connectives includes $\land$ (``and"), $\vee$ (``or"), $\neg$ (``not"), etc.

\paragraph{Product Automaton.}
\label{apdx:sec:prod-autom}
\glsreset{aut}\glsreset{dfa}\glsreset{nfa} 

Let a controller be
$\Aut[controller] \coloneqq
\langle
    \AutSymbsIn,
    \AutSymbsOut,
    \AutStates,
    \Autstate[init],
    \AutTransFunc
\rangle$
with input alphabet $\AutSymbsIn := 2^{\AutProps}$, output alphabet $\AutSymbsOut := 2^{P_A}$,
and non-deterministic transition function $\AutTransFunc: \AutStates \times \AutSymbsIn \times \AutSymbsOut \times \AutStates \to \{0,1\}$.

Let a model be a tuple 
$\Aut[model] \coloneqq
\langle
    \AutSymbsOut[model],
    \AutStates[model],
    \AutTransFunc[model],
    \AutLabelFunc[model]
\rangle$
with output symbols $\AutSymbsOut[model] = 2^{ \AutProps \cup \{goal\}}$, a non-deterministic transition function $\AutTransFunc[model]: \AutStates[model] \times \AutStates[model] \to \{0,1\}$, and a label function $\AutLabelFunc[model]: \AutStates[model] \to \AutSymbsOut[model]$.

We define the \emph{product automaton} as a
transition system
$\Aut[product] =
\Aut[model] \otimes \Aut[controller] \coloneqq
\langle
    \AutStates[product],
    \AutTransFunc[product],
    \Autstate[product-init],
    \AutLabelFunc[product]
\rangle$
as follows:
\begin{align*}
\AutStates[product] &\coloneqq \AutStates[model] \times \AutStates
\\
\AutTransFunc[product]( (p,q) )
&\coloneqq
\left\{
    (p', q') \in \AutStates[product] 
    \middle|
    \AutTransFunc(q, \AutLabelFunc[model](p) \cap \AutSymbsIn, a, q') = 1
    \text{ and }
    \AutTransFunc[model](p, p') = 1, \text{for some } a \in A
\right\}
\\
\Autstate[product-init] &\coloneqq \{(p, \Autstate[init]) | p \in \AutStates[model] \}
\\
\AutLabelFunc[product]((p,q), (p',q')) 
&\coloneqq
\left\{
    \AutLabelFunc[model](p) \cup a
    \middle|
    a \in \AutSymbsOut
    \text{ and }
    \AutTransFunc(q, \AutLabelFunc[model](p)\cap \AutProps[model], a, q') = 1
    \text{ and }
    \AutTransFunc[model](p, p') = 1
\right\}
\end{align*}

$\AutTransFunc[product] : \AutStates[product] \to 2^{\AutStates[product]}$
is a non-deterministic transition function,
and $\AutLabelFunc[product] : \AutStates[product] \times \AutStates[product] \to 2^{ \AutProps \cup P_A }$ is a label function.

The product automaton generates infinite-length trajectories in the form of \((p_0,q_0), (p_1, q_1), \ldots\) by beginning in an initial state \(\Autstate[product-init]\) and following the nondeterministic transition function \(\AutTransFunc[product]\) thereafter. 
Labeled trajectories are then generated by applying the labeling function \(\AutLabelFunc[product]\) to these trajectories within the product automaton, i.e. \( \psi_0 \psi_1, \ldots \in ( 2^{ \AutProps \cup P_A  })^*\) where \(\psi_{i} \in \AutLabelFunc[product]((p_i, q_i), (p_{i+1},q_{i+1}))\).
When using the product automaton to solve the model-checking problem, we check that all possible labeled trajectories generated by the product automaton belong to the language defined by the LTL specification.

\section{Additional Explanations to the Method}
\paragraph{Formal Verification vs. Empirical Evaluation.}
Formal verification provides a mathematical guarantee on whether a specification is satisfied, while empirical evaluation examines the controller's behavior in practical operations.
\begin{definition}
    Let $\mathcal{M}$ be the automaton-based model for the system $\mathcal{S}$. If $\mathcal{M}$ captures all the transitions $\{(\sigma, \neg \sigma)|\sigma \in P \cup P_A\}$ allowed by $\mathcal{S}$, then we say $\mathcal{M}$ captures \textbf{complete information} from $\mathcal{S}$.
\end{definition}
From the properties of formal verification, we can derive Theorem \ref{thm: verif}.
\begin{theorem}
\label{thm: verif}
    If $\mathcal{M}$ captures complete information from $\mathcal{S}$, then
    \begin{equation}
        \Aut[model] \otimes \Aut[controller] \models \Phi \implies \mathbf{G}(\mathcal{C}, \mathcal{S}) \models \Phi.
    \end{equation}
\end{theorem}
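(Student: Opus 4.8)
The strategy is a forward-simulation (trace-inclusion) argument: I will show that every execution sequence producible by $\mathbf{G}(\mathcal{C},\mathcal{S})$ is also a labeled trajectory of the product automaton $\Aut[model]\otimes\Aut[controller]$. Granting this, the implication is immediate --- if $\Aut[model]\otimes\Aut[controller]\models\Phi$ then in particular every sequence returned by $\mathbf{G}$ lies in the language of $\Phi$, which is exactly $\mathbf{G}(\mathcal{C},\mathcal{S})\models\Phi$. So the whole content is the containment of runs, i.e.\ that the verified model plus controller over-approximates the grounded behavior.

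To carry out the inclusion, fix an arbitrary sequence $(\sigma_0\cup a_0),(\sigma_1\cup a_1),\dots$ returned by $\mathbf{G}(\mathcal{C},\mathcal{S})$, where $\sigma_i\in 2^P$ are the propositions reported by the system at step $i$ and $a_i\in 2^{P_A}$ is the action the controller $\mathcal{C}$ executes there. I would induct on $i$ to build a matching trajectory $(p_0,q_0),(p_1,q_1),\dots$ of $\Aut[product]$ with $\AutLabelFunc[model](p_i)=\sigma_i$ and with $q_{i+1}$ a successor of $q_i$ under $\AutTransFunc$ witnessed by $a_i$. The base case uses $\Autstate[product-init]=\{(p,\Autstate[init])\mid p\in\AutStates[model]\}$: take $p_0$ to be the model state labeled $\sigma_0$ (present in $\AutStates[model]$, since the system reaching $\sigma_0$ means this state has an incident transition and hence survives the pruning step) and $q_0=\Autstate[init]$. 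For the inductive step, two ingredients combine. First, because $\mathcal{C}$ \emph{is} the automaton $\Aut[controller]$, the action it emits at control state $q_i$ on reading $\sigma_i$ is an $a_i$ with $\AutTransFunc(q_i,\sigma_i\cap\AutSymbsIn,a_i,q_{i+1})=1$ for its successor $q_{i+1}$. Second --- the place where the hypothesis is used --- since $\mathcal{M}$ captures complete information it contains every transition $\mathcal{S}$ permits, so the real-system step from $\sigma_i$ to $\sigma_{i+1}$ is mirrored by a model edge $\AutTransFunc[model](p_i,p_{i+1})=1$ with $\AutLabelFunc[model](p_{i+1})=\sigma_{i+1}$. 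By the definition of $\AutTransFunc[product]$ this yields $(p_{i+1},q_{i+1})\in\AutTransFunc[product]((p_i,q_i))$, and $\AutLabelFunc[product]$ assigns the edge out of $(p_i,q_i)$ the label $\AutLabelFunc[model](p_i)\cup a_i=\sigma_i\cup a_i$, so the constructed trajectory's label sequence coincides with the given one. Feeding this (finite, or stuttering-extended to infinite) labeled trajectory into the LTL semantics and invoking $\Aut[model]\otimes\Aut[controller]\models\Phi$ gives the claim.

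The main obstacle is making the completeness hypothesis do exactly the work required. The definition of ``complete information'' is phrased in terms of the single-proposition flips $(\sigma,\lnot\sigma)$ for $\sigma\in P\cup P_A$, whereas a real-system step may change several propositions at once; closing this gap needs either an assumption that system updates change one proposition at a time, or an argument that a simultaneous change factors through a sequence of atomic flips together with a stuttering-robustness property of the specifications $\Phi$. A second, more routine, subtlety is aligning the finite horizon $N$ of the sequences returned by $\mathbf{G}$ with the infinite-trace semantics under which $\Aut[model]\otimes\Aut[controller]\models\Phi$ is stated --- this is handled by extending finite traces with stuttering in the terminal state and observing that the traffic-rule specifications of interest are safety properties, whose violations are already witnessed on finite prefixes. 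Finally, one must invoke the standing assumption from the verification procedure that each action operates successfully, so that the state $\mathcal{S}$ actually transitions to is among those the product automaton allows; with completeness of $\mathcal{M}$ this is in fact automatic, since any realized transition is in particular one that $\mathcal{S}$ permits.
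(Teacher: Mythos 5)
Your proposal is correct and is essentially the paper's argument: the paper proves the same trace-inclusion claim in contrapositive form (a violating sequence from $\mathbf{G}(\mathcal{C},\mathcal{S})$ is a counter-example trace of $\Aut[model]\otimes\Aut[controller]$, hence the product automaton also fails $\Phi$), which is logically identical to your direct forward-simulation containment. Your writeup is in fact considerably more detailed than the paper's three-sentence proof, and the subtleties you flag --- the single-flip phrasing of ``complete information'' versus multi-proposition system steps, and the finite-horizon versus infinite-trace semantics --- are real gaps that the paper's proof silently elides rather than issues with your approach.
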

\begin{proof}
    Suppose $\mathbf{G}(\mathcal{C}, \mathcal{S}) \not\models \Phi$, then we can find at least one sequence $(\sigma_i, a_i)^N \in (2^P \times 2^{P_A})^N$ that violates $\Phi$, i.e., $(\sigma_i, a_i)^N$ is a counter-example. Hence, we get $\Aut[model] \otimes \Aut[controller] \not\models \Phi$. By contra-positive, Theorem \ref{thm: verif} holds.
\end{proof}
Therefore, the formal verification results provide stronger guarantees than empirical results.

However, if the model $\mathcal{M}$ does not capture complete information from the system, then the guarantees provided by formal verification are no longer valid. Hence, we can use empirical evaluation in place of formal guarantees. From another perspective, empirical evaluation can be used to check whether the model $\mathcal{M}$ has captured complete information.

\section{Additional Empirical Demonstration}
\paragraph{Additional System Modeling}

\begin{figure}[t]
\centering
\includegraphics[width=0.25\columnwidth]{figures/traffic_scenarios/Scenario1_Traffic_Light_Intersection.pdf}%
\hspace{0.5mm}%
\includegraphics[width=0.25\columnwidth]{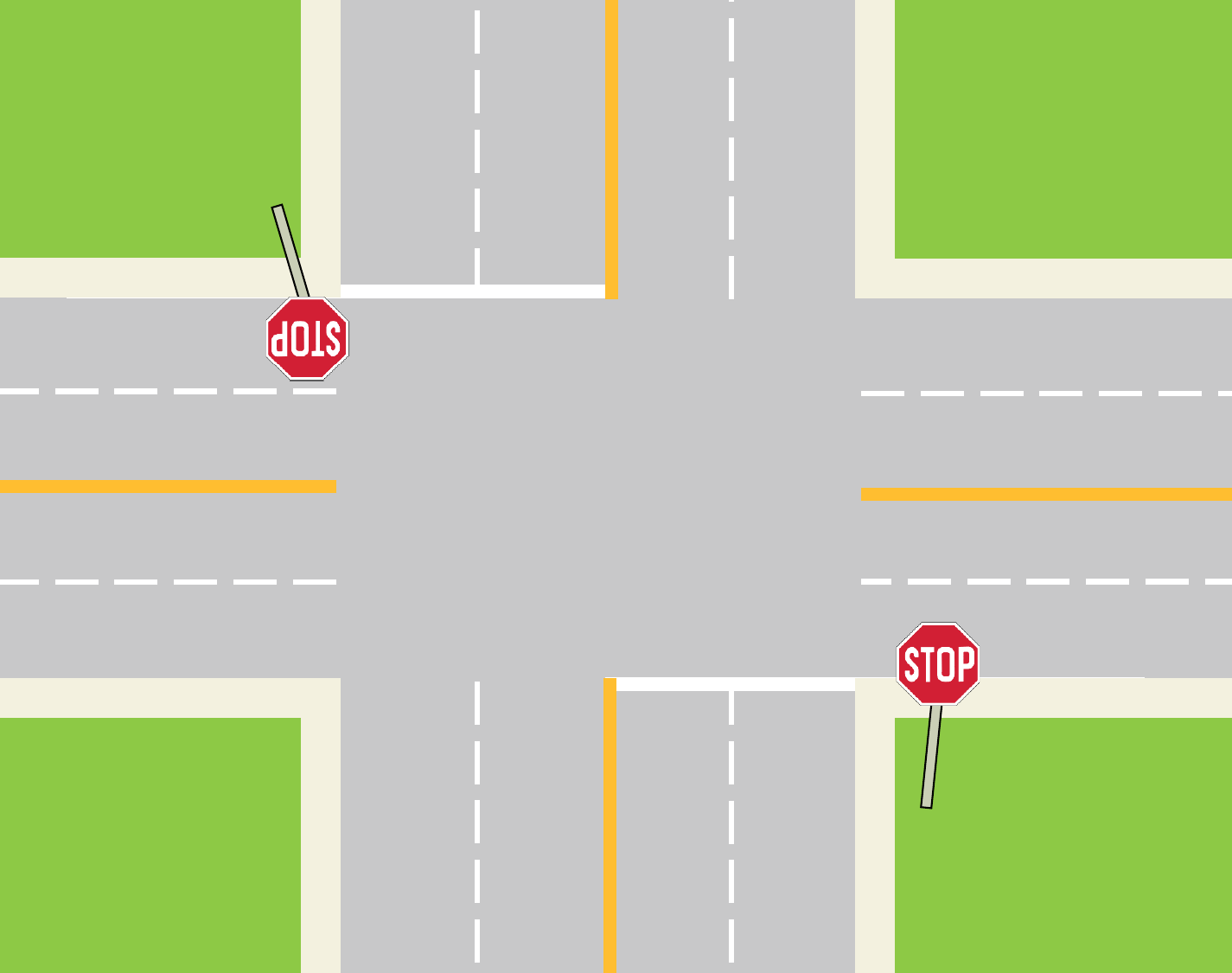}%
\hspace{0.5mm}%
\includegraphics[width=0.197\columnwidth]{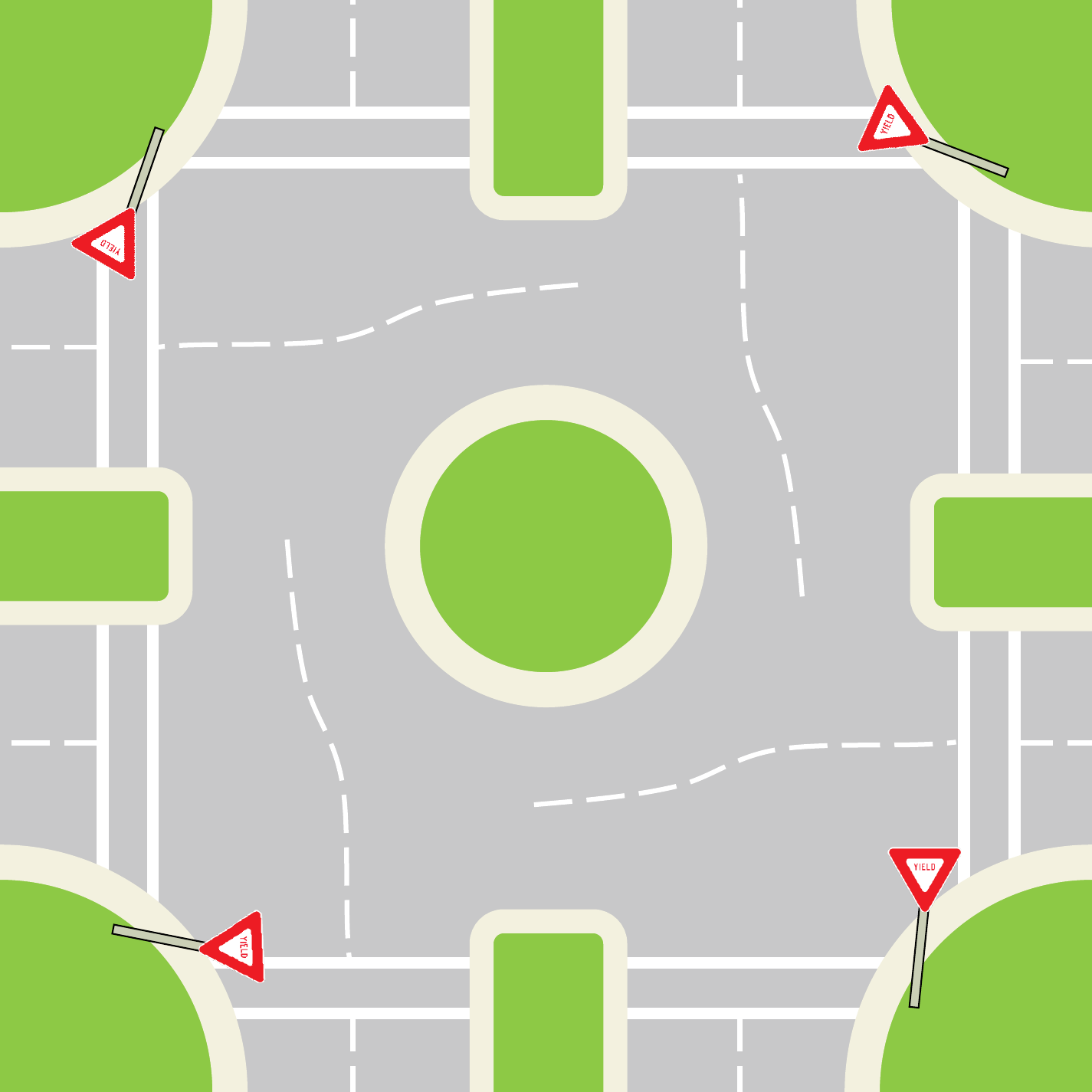}%
\hspace{0.5mm}%
\includegraphics[width=0.25\columnwidth]{figures/traffic_scenarios/Scenario5_Wide_Median.pdf}%
\caption{Illustration of different scenarios from the autonomous driving system. The figures from left to right show the scenarios for an intersection with a traffic light, a two-way stop sign, a roundabout, and an intersection with a wide median, respectively.} \label{fig:traffic_scenarios_add}
\end{figure}

We present more examples of the automaton-based models encoding other scenarios in the autonomous system. The scenarios include a traffic light with an explicit left-turn signal, a two-way stop sign, and a roundabout.
\begin{figure}[ht]
    \centering
    \begin{tikzpicture}[
    scale=.6,
    node distance=2.2cm,
    thick,
    every node/.append style={transform shape},
]

\node[state] (p0)
    at (0,0)
    {\shortstack{$p_0$}};
\node[state] (p1)
    at (4,2)
    {\shortstack{$p_1$}};
\node[state] (p2)
    at (9,0)
    {\shortstack{$p_2$}};

\path[->,sloped]

(p0) 
edge[loop left] node[sloped=false]
    { $\neg$ green LL }
    (p0)
edge[bend left] node[above]
    { green LL }
    (p1)
edge[bend right] node[below]
    { $\neg$ green LL $\land$ (opp car $\vee$ ped at left) }
    (p2)

(p1) 
edge[bend left] node[]
    { $\neg$ green LL $\land$ (opp car $\vee$ ped at left) }
    (p2)
edge[loop above] node[sloped=false]
    { LL is green }
    ()
    
(p2) 
edge[loop right] node[above, sloped=true]
    { $\neg$ green LL $\land$ (opp car $\vee$ ped at left) }
    ()
edge[] node[]
    { $\neg$ green LL }
    (p0)

;

\end{tikzpicture}
    \caption{
        An automaton-based model represents a vehicle's environment dynamics at an intersection's left turn traffic signal. LL represents ``Left-Turn Light," ped represents ``pedestrian." 
    }
    \label{fig: left_turn_light}
\end{figure}
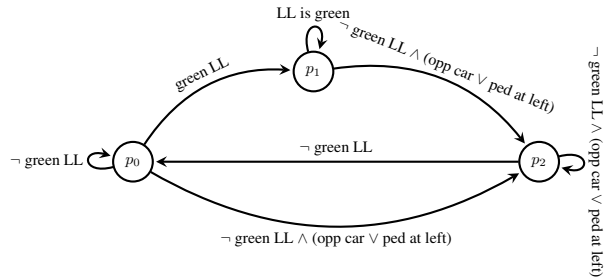

\begin{figure}[ht]
    \centering
    \begin{tikzpicture}[
    scale=.6,
    node distance=2.2cm,
    thick,
    every node/.append style={transform shape},
]

\node[state] (p0)
    at (0,0)
    {\shortstack{$p_0$}};
\node[state] (p1)
    at (4,1)
    {\shortstack{$p_1$}};
\node[state] (p2)
    at (1,3)
    {\shortstack{$p_2$}};
\node[state] (p3)
    at (2.5,-2)
    {\shortstack{$p_3$}};
\node[state] (p4)
    at (1,5)
    {\shortstack{$p_4$}};
\node[state] (p5)
    at (5,-4)
    {\shortstack{$p_5$}};

\path[->,sloped]

(p0) 
edge[in=-145, out=-90, loop] node[below]
    { left car $\lor$ right car }
    (p0)
edge[bend left] node[above]
    { $\lnot$ (left car $\land$ right car) }
    (p1)

(p1) 
edge[bend right] node[]
    { front car}
    (p4)
edge[bend right] node[]
    { $\lnot$ front car }
    (p2)
edge[bend left] node[below]
    { front car }
    (p3)
edge[bend left] node[]
    { front car   }
    (p5)
    
(p2) 
edge[bend right] node[]
    { left car $\lor$ right car }
    (p0)

(p3) 
edge[bend right] node[]
    { left car $\lor$ right car }
    (p0)

(p4) 
edge[in=160, out=-145] node[]
    { left car $\lor$ right car }
    (p0)
(p5) 
edge[bend left] node[below]
    { left car $\lor$ right car }
    (p0)
;

\end{tikzpicture}
    \caption{
        An automaton-based model represents a vehicle's environment dynamics at a two-way stop sign.
    }
    \label{fig: two_way_stop_sign}
\end{figure}
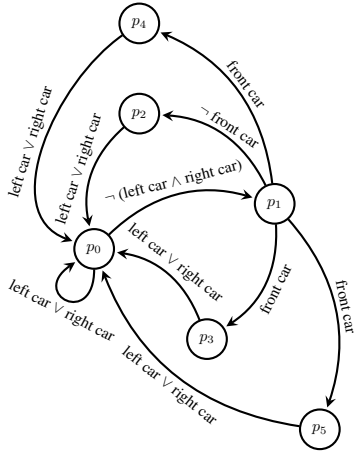

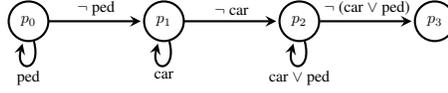
\begin{figure}[ht]
    \centering
    \begin{tikzpicture}[
    scale=.6,
    node distance=2.2cm,
    thick,
    every node/.append style={transform shape},
]

\node[state] (p0)
    at (0,0)
    {\shortstack{$p_0$}};
\node[state] (p1)
    at (3,0)
    {\shortstack{$p_1$}};
\node[state] (p2)
    at (6,0)
    {\shortstack{$p_2$}};
\node[state] (p3)
    at (9,0)
    {\shortstack{$p_3$}};

\path[->,sloped]

(p0) 
edge[loop below] node[below]
    { ped }
    (p0)
edge[] node[above]
    { $\lnot$ ped }
    (p1)

(p1) 
edge[loop below] node[]
    { car }
    (p1)
edge[] node[above]
    { $\lnot$ car }
    (p2)
    
(p2) 
edge[loop below] node[below]
    { car $\lor$ ped }
    (p2)
edge[] node[above]
    { $\lnot$ (car $\lor$ ped) }
    (p3)
;

\end{tikzpicture}
    \caption{
        An automaton-based model representing the environment dynamics of a vehicle at a roundabout. The proposition ``car" represents ``car from left" and the proposition ``ped" represents ``pedestrian at left $\land$ pedestrian at right." 
    }
    \label{fig: roundabout}
\end{figure}

\paragraph{The Complete Set of LTL Specifications.}
We verify the LLM's outputs through the following 15 LTL specifications:

$\Phi_1 = \lalways ( \text{pedestrian} \rightarrow (\leventually \text{stop}))$,

$\Phi_2 = \lalways ( \text{opposite car} \land \neg \text{green left-turn light} \rightarrow \neg \text{turn left})$,

$\Phi_3 = \lalways ( \neg \text{green traffic light} \rightarrow \neg \text{go straight})$,

$\Phi_4 = \lalways (\text{stop sign} \rightarrow \leventually \text{stop} )$,

$\Phi_5 = \lalways ( \text{car from left} \vee \text{pedestrian at right} \rightarrow \neg \text{turn right} )$,

$\Phi_6 = \lalways (\text{stop} \vee \text{go straight} \vee \text{turn left} \vee \text{turn right} )$,

$\Phi_7 = \leventually (\text{green traffic light} \vee \text{green left-turn light}) \rightarrow \leventually \neg \text{stop})$,

$\Phi_8 = \lalways ( \neg \text{green traffic light} \rightarrow \leventually \text{stop} )$,

$\Phi_9 = \lalways ( \text{car from left} \rightarrow \neg (\text{turn left} \vee \text{turn right} ) )$,

$\Phi_{10} = \lalways ( \text{green traffic light} \rightarrow \leventually \neg \text{stop} )$,

$\Phi_{11} = \lalways ( (\text{turn right} \land \neg \text{green traffic light}) \rightarrow \neg \text{car from left} )$,

$\Phi_{12} = \lalways ( (\text{turn left} \land \neg \text{green left-turn light}) \rightarrow (\neg \text{car from right} \land \neg \text{car from left} \land \neg \text{opposite car}) )$,

$\Phi_{13} = \lalways ( (\text{stop sign} \land \neg \text{car from left} \land \neg \text{car from right}) \rightarrow (\leventually \neg \text{stop}) )$,

$\Phi_{14} = \lalways ( (\text{go straight} \rightarrow \neg \text{pedestrian in front} )$,

$\Phi_{15} = \lalways ( (\text{turn right} \land \text{stop sign}) \rightarrow \neg \text{car from left} )$.

\paragraph{Example on Controller Construction and Verification: Left-Turn.}
We provide an example of the controllers for the task "turn left at the traffic light" and a model representing the traffic light environment. When implemented in the model, we can verify whether the controllers satisfy the provided specifications.

The responses from the language model before fine-tuning:
\begin{lstlisting}[language=completion]
    <prompt>Steps for "turn left at traffic light"</prompt>
    <completion>1. Approach the traffic light with a left-turn light.
    2. Wait for the left-turn light to turn green.
    3. When the left-turn light turns green, wait for oncoming traffic to clear before turning left.
    4. Turn left and proceed through the intersection. </completion>
\end{lstlisting}

The responses from the language model after fine-tuning:
\begin{lstlisting}[language=completion]
    <prompt>Steps for "turn left at traffic light"</prompt>
    <completion>1. Approach the traffic light and observe the left turn light.
    2. If the left turn light is not green, then stop.
    3. If the left turn light is green, then turn left.</completion>
\end{lstlisting}

We present constructed controllers in Figure \ref{fig: left}.
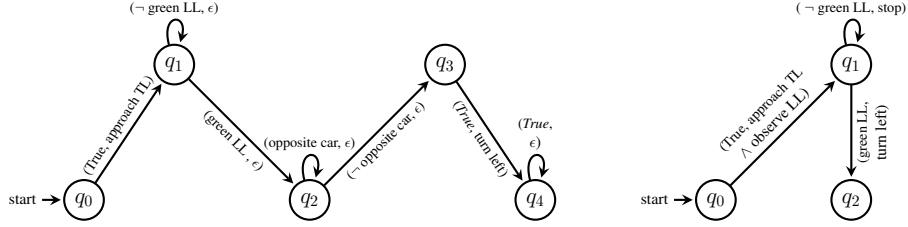
\begin{figure*}[t]
    \centering
    \begin{tikzpicture}[thick,scale=.6, node distance=2.2cm, every node/.style={transform shape}]
    \node[state,initial] (0) at (1, 0) {\Large $q_0$};
    \node[state] (1) at (3, 3) {\Large $q_{1}$};
    \node[state] (2) at (6, 0) {\Large $q_{2}$};
    \node[state] (3) at (9, 3) {\Large $q_{3}$};
    \node[state] (4) at (11, 0) {\Large $q_{4}$};

    \draw[->, shorten >=1pt, sloped] (0) to[left] node[above, align=center] {\small (True, approach TL)} (1);
    
    \draw[->, shorten >=1pt, sloped] (1) to[left] node[below, align=center] {\small (green LL , $\noop$)} (2);
    \draw[->, shorten >=1pt, sloped] (1) to[loop above] node[above, align=center] {\small ($\neg$ green LL, $\noop$)} ();

    \draw[->, shorten >=1pt, sloped] (2) to[left] node[below, align=center] {\small ($\neg$ opposite car, $\noop$)} (3);
    \draw[->, shorten >=1pt, sloped] (2) to[loop above] node[above, align=center] {\small (opposite car, $\noop$)} ();

    \draw[->, shorten >=1pt, sloped] (3) to[left] node[below, align=center] {\small (\textit{True}, turn left)} (4);

    \draw[->, shorten >=1pt] (4) to[loop above] node[align=center] {\small (\textit{True}, \\ $\noop$)} ();

    \node[state,initial] (10) at (15, 0) {\Large $q_0$};
    \node[state] (11) at (18, 3) {\Large $q_{1}$};
    \node[state] (12) at (18, 0) {\Large $q_{2}$};

     \draw[->, shorten >=1pt, sloped] (10) to[left] node[above, align=center] {\small (True, approach TL \\ $\land$ observe LL)} (11);

     \draw[->, shorten >=1pt, sloped] (11) to[left] node[below, align=center] {\small (green LL, \\ turn left)} (12);
    \draw[->, shorten >=1pt, sloped] (11) to[loop above] node[above, align=center] {\small ( $\neg$ green LL, stop)} ();
\end{tikzpicture}
    \caption{
    Automaton-based controllers for the task ``turn left at the traffic light with the left-turn signal." The left and right controllers are constructed from the language model before and after fine-tuning, respectively. TL and LL represent ``traffic light" and ``left-turn light."
    }
    \label{fig: left}
\end{figure*}

We implement both controllers in the model presented in Figure \ref{fig: left_turn_light}, i.e., get the product automaton of the controller and the model. Then, we can verify both controllers against the 15 specifications.

The controller obtained before fine-tuning fails specification $\Phi_{12}$, while the one after fine-tuning passes all the specifications.

\section{Verification using NuSMV}
\lstinputlisting[language=NuSMV]{figures/nusmv/left_turn.smv}

\lstinputlisting[language=NuSMV]{figures/nusmv/right_turn.smv}

\lstinputlisting[language=NuSMV]{figures/nusmv/exec.smv}

\section{Fine-tuning Llama-2: Implementation Details}

\paragraph{Llama-2 Prompts}

LLama-2 has particular implementation requirements for the prompt. Certain tokens are required which delineate system and user messages. System messages describe what role the language model should act as, while user messages describe the task at hand. We use the following prompt for Llama-2, where italicized symbols represent special input tokens, and the last sentence is a given task:
\begin{displayquote}
$<s>[INST] <<SYS>>$

You are a helpful assistant. Always answer as helpfully as possible, while being safe.  Your answers should be detailed.
$<</SYS>>$

Steps for ``turn right at traffic light": $[/INST]$
\end{displayquote}

\paragraph{Fine-tuning Efficiency}

Due to hardware limitations, fine-tuning all model parameters is impractical. Instead, it is possible to fine-tune a low-rank approximation of a given matrix within the model \cite{lora}. For example, instead of updating a matrix $W \in \mathcal{R}^{d \times d}$, it is more memory efficient to update two matrices $A \in \mathcal{R}^{d \times k}, B \in \mathcal{R}^{k \times d}$ with $k << d$, by holding $W$ constant and defining $\Tilde{W} = W + AB$. Then, $A$ and $B$ can be updated using gradient descent with a smaller memory profile than updating $W$ itself, since the combined number of parameters in $A$ and $B$ is much less than $W$.


\end{document}